\newcommand{\appxref}[1]{Appendix~\ref{appx:#1}}
\newcommand{\angstrom}{\text{\r{A}}}
\renewcommand{\ttb}{{\bar{t}}}
\newcommand{\texteq}{\textnormal{eq}}
\newcommand{\pred}{\textnormal{pred}}
\newcommand{\multitask}{\textnormal{multi-task}}
\newcommand{\optim}{\textnormal{optim-cons}}
\newcommand{\score}{\textnormal{score-cons}}
\title{Physical Consistency Bridges Heterogeneous Data in Molecular Multi-Task Learning}
\author{%
  Yuxuan Ren\thanks{Equal contribution.}\;\,\thanks{Work done during an internship at Microsoft Research AI for Science.}, Dihan Zheng$^{*\dagger}$, Chang Liu\thanks{Correspondence to: \textless{}changliu@microsoft.com\textgreater{}},\, %
  Peiran Jin, Yu Shi, Lin Huang, \\[2pt]
  \textbf{Jiyan He$^\dagger$, Shengjie Luo$^\dagger$, Tao Qin, Tie-Yan Liu} \\[6pt]
  Microsoft Research AI for Science %
}
\begin{document}
\abovedisplayskip=4pt
\belowdisplayskip=5pt
\abovedisplayshortskip=3pt
\belowdisplayshortskip=4pt

\maketitle

\begin{abstract}
  In recent years, machine learning has demonstrated impressive capability in handling molecular science tasks. To support various molecular properties at scale, machine learning models are trained in the multi-task learning paradigm.
  Nevertheless, data of different molecular properties are often not aligned: some quantities, \eg equilibrium structure, demand more cost to compute than others, \eg energy, so their data are often generated by cheaper computational methods at the cost of lower accuracy, which cannot be directly overcome through multi-task learning.
  Moreover, it is not straightforward to leverage abundant data of other tasks to benefit a particular task. %
  To handle such data heterogeneity challenges, we exploit the specialty of molecular tasks that there are physical laws connecting them, and design consistency training approaches that allow different tasks to exchange information directly so as to improve one another. %
  Particularly, we demonstrate that the more accurate energy data can improve the accuracy of structure prediction.
  We also find that consistency training can directly leverage force and off-equilibrium structure data to improve structure prediction, demonstrating a broad capability for integrating heterogeneous data.
\end{abstract}

\vspace{-2pt}
\section{Introduction} \label{sec:intro}
\vspace{-2pt}
The field of machine learning has witnessed a blossom of progress in solving molecular science tasks in recent years, including molecular property prediction~\citep{unke2019physnet,ying2021transformers,schutt2021equivariant}, machine-learning force field (energy/force prediction)~\citep{zhang2018deep,gasteiger2021gemnet,chen2022universal,batatia2022mace,musaelian2023learning}, electronic structure~\citep{li2022deep,zhang2024selfconsistency,kirkpatrick2021pushing,remme2023kineticnet,zhang2024overcoming}, molecular structure generation~\citep{noe2019boltzmann,jumper2021highly,pmlr-v162-hoogeboom22a,zheng2024predicting} and design~\citep{watson2023novo,ingraham2023illuminating,gomez2018automatic}.
In molecular research, these tasks are often required jointly: for example, energy prediction is required for molecular stability and dynamics, %
and equilibrium structure (conformation) prediction offers the most probable and characteristic structure for %
understanding molecule interaction and functions.
Multi-task learning is hence adopted, where a model is trained to predict multiple properties using the same number of decoders (output heads) built on a shared encoder (backbone model) (\figref{illustration})~\citep{caruana1997multitask}.
This paradigm is also used for pre-training a model by leveraging as much data as possible that are scattered over various tasks and domains~\citep{zhou2023unimol,zhang2023dpa2,luo2023one}.

Nevertheless, science tasks have some unique challenges beyond conventional machine learning tasks, which cannot be adequately addressed by multi-task learning alone. %
It is more costly to curate a dataset for molecular science tasks since it calls for running physics-theoretic computation algorithms, which come with a stringent accuracy-efficiency trade-off.
Molecular-science datasets are hence generated each with a specific algorithmic portfolio that is economic for the particular purpose. This incurs two challenges regarding data heterogeneity.
\itemone~Different properties in a dataset may come from algorithms in different levels of theory, meaning different levels of accuracy.
This limits the prediction accuracy on some tasks. %
For example, labeling the energy of a molecular structure is yet affordable for common molecules using algorithms in the density functional theory (DFT) level, but producing the equilibrium structure of a molecule requires repeated energy evaluations hence is tens to hundreds of times more costly.
As a result, DFT-level equilibrium structure data are available only in a limited scale~\citep{ramakrishnan2014quantum,nakata2017pubchemqc,hu2021ogb}, while larger-scale datasets~\citep{nakata2020pubchemqc,nakata2023pubchemqc} have to resort to lower levels of theory %
to generate equilibrium structures at scale, which come with a lower level of accuracy.
Using such data, %
multi-task learning alone cannot predict structures in an accuracy higher than the data-generation method.
\itemtwo~Different datasets focus on different tasks, so combining these datasets to enhance the performance on a particular task is not straightforward. %
For example, there are datasets~\citep{chmiela2017machine,chmiela2023accurate,eastman2023spice} that are concerned with force prediction and off-equilibrium structures, which do not provide direct supervision to equilibrium structure prediction.
Although including these additional tasks in multi-task learning could help learn a better encoder (or, representation), this is only based on an empirical observation from a general machine learning perspective and does not directly exploit relevant physical information in the additional tasks. %

\begin{wrapfigure}{r}{.450\textwidth}
  \centering
  \vspace{-16pt}
  \includegraphics[width=.448\textwidth]{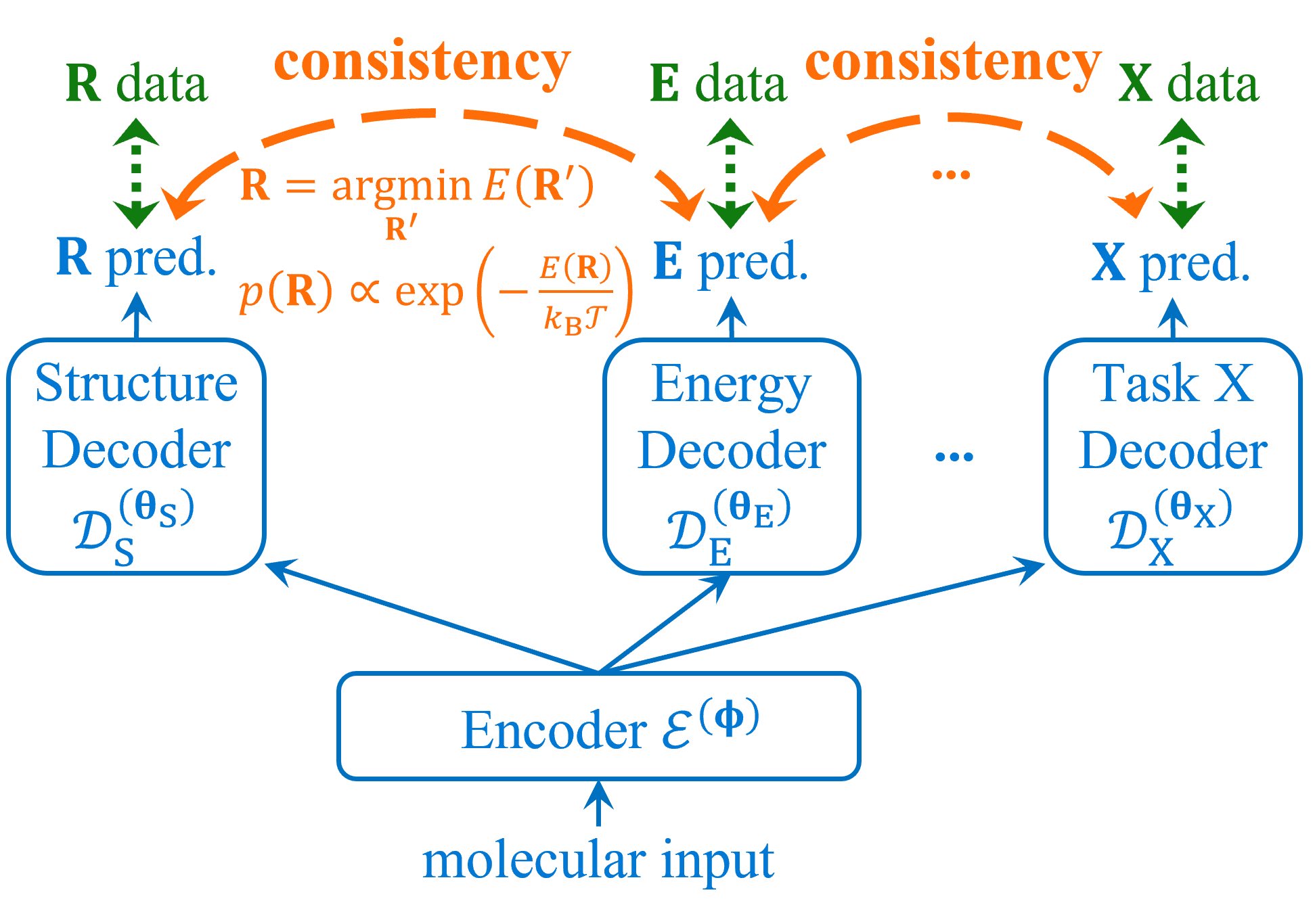}
  \vspace{-18pt}
  \caption{Illustration of the idea of physical consistency. To support multiple tasks (``Task X'' represents a general task), the model (blue solid lines) builds multiple decoders on a shared encoder, which are trained by multi-task learning with data of respective tasks (green dotted double arrows). Physical consistency losses enforce physical laws between tasks (orange dashed double arrows), hence bridge data heterogeneity and directly improve one task from others.
  }
  \vspace{-8pt}
  \label{fig:illustration}
\end{wrapfigure}

In this work, we highlight that science tasks also provide fortunate specialties that come to the rescue.
In contrast to conventional machine learning tasks which are primarily defined by data, science tasks originate in fundamental physical laws, and data are rather the demonstration of such laws. %
These laws impose explicit constraints between tasks, hence define the ``physical consistency'' between model predictions on these tasks.
By enforcing such consistency, model predictions for different tasks are connected and can explicitly share the information in the data of one task to the prediction for other tasks, hence bridging data heterogeneity.
From another perspective, while sharing a common encoder connects various decoders at the input end, physical consistency closes the loop by connecting the decoders at the output end (\figref{illustration}).
With the additional information from the physical consistency, capabilities beyond conventional multi-task learning are enabled:
\itemone~data from a higher-level of theory of one task can improve the accuracy of a physically related task, and
\itemtwo~the abundant data of a physically related task can directly improve the performance of the concerned task.

Concretely, we demonstrate the practical value of the physical consistency between energy prediction and equilibrium structure prediction, two central tasks in molecular science. The consistency can be constructed from two perspectives: the equilibrium structure of a molecule is the structure that attains the minimal energy of the molecule (\secref{optim-csis}), and the equilibrium structure is a sample of the thermodynamic equilibrium distribution at a low temperature (\secref{score-csis}). When adopting the denoising diffusion formulation~\citep{sohl2015deep,ho2020denoising,song2021score} for structure prediction, we show that the two physical laws can be translated into consistency loss functions which connect the energy-prediction model with the structure-prediction model at large and small diffusion steps, respectively.
We apply the consistency losses in the multi-task learning on the PubChemQC B3LYP/6-31G*//PM6 dataset~\citep{nakata2023pubchemqc}, which is perhaps the largest public dataset with DFT-level (B3LYP/6-31G*) energy labels thus highly relevant to model pre-training. %
But its equilibrium structures are generated in the semi-empirical level (PM6), %
which is in a lower level of accuracy.
We use the consistency losses to transmit the information of the higher-level theory in the energy model to the structure model by only optimizing parameters of the structure decoder, %
which achieves the followings.
\itemone~Consistency losses improve structure prediction accuracy beyond multi-task learning alone, when evaluated using DFT-level structures from the PCQM4Mv2~\citep{hu2021ogb} and QM9~\citep{ramakrishnan2014quantum} datasets. %
The advantage persists even after both have undergone finetuning.
\itemtwo~Datasets providing DFT-level force labels on off-equilibrium structures are better leveraged to improve structure prediction accuracy beyond only including force prediction in multi-task learning. Since force is the gradient of the energy, the additional data allow the energy model to learn a better energy landscape, which in turn leads to more accurate structure prediction through the consistency losses. This is a more direct pathway for data on an additional task to benefit structure prediction beyond learning a better representation in multi-task learning.
We remark that the improvement is not from any more accurate structure data, but from bridging data heterogeneity using the ``free lunch'' redeemed from physical laws.

\subsection{Related Work} \label{sec:relw}

\paragraph{Structure prediction.}
In recent years, deep generative models have been used as a powerful tool to generate molecular structures. %
Due to the subtlety that a structure being rotated as a whole is essentially the same structure, early methods opt to generate intermediate geometric variables, such as inter-atomic distances~\citep{pmlr-v119-simm20a,xu2021learning} or torsional angles~\citep{jing2020learning,ganea2021geomol}.
Directly generating Cartesian coordinates of atoms has also been explored where the rotational equivalence is handled by alignment~\citep{zhu2022direct} or leveraging gradient of distances~\citep{pmlr-v139-shi21b,NEURIPS2021_a45a1d12}.
More recently, diffusion models have been used to generate torsional angles~\citep{jing2022torsional}, or atom coordinates~\citep{pmlr-v162-hoogeboom22a,xu2022geodiff} using equivariant models.
Given the generality and superior performance, we adopt an equivariant diffusion model for structure prediction.
While these works may generate multiple meta-stable structures, we aim at the single equilibrium structure for each molecule, and investigate the benefit from an energy model.

\paragraph{Leveraging physical laws between tasks.}
A noticeable example is leveraging the connection between energy and thermodynamic equilibrium distribution to compensate for potentially biased data to better learn the distribution.
The equilibrium distribution in a canonical ensemble is the Boltzmann distribution, which is directly determined by the energy function. From the machine learning perspective, the energy function provides an unnormalized density function of the target distribution. %
Using a flow-based model~\citep{rezende2015variational}, Boltzmann generators~\citep{noe2019boltzmann} inject the energy supervision via the evidence lower bound objective. %
Bootstrapped $\alpha$-divergence objective is thereafter introduced to mitigate mode collapse~\citep{midgley2023flow}.
Zheng et al.~\citep{zheng2024predicting} use a diffusion model %
where the energy supervises the score model at the start diffusion step and is propagated to intermediate steps by enforcing a PDE. %
Similar techniques are also explored for conventional machine learning tasks~\citep{berner2022optimal,vargas2023denoising}.
More recently, Bose et al.~\citep{bose2024iterated} directly connect the energy to intermediate-step scores.
In the opposite direction, Arts et al.~\citep{arts2023two} leverage Boltzmann-distribution samples to learn a coarse-grained energy model. %
In parallel with these works, the current work is devoted to the investigation of leveraging the connection between energy and structure prediction, and is not using an oracle energy function considering the cost of DFT calculation.

\vspace{-2pt}
\section{Technical Background} \label{sec:prelim}
\vspace{-2pt}
Before delving into details of the consistency between energy and structure prediction, we first introduce the problem formulation, and diffusion-based generative formulation for structure prediction.

Chemically, a molecule is specified by the types (\ie, chemical elements) of atoms and bonds, jointly forming a molecular graph $\clG$. A molecule in physical reality can take different structures (conformations) $\bfR \in \bbR^{A \times 3}$, \ie, the collection of 3-dimensional coordinates of its $A$ atoms. %
Many properties of molecule $\clG$ depend on its specific structure $\bfR$, \eg, the (inter-atomic potential) energy, so energy prediction is in the form $E^{(\bftheta)}_\clG(\bfR)$ with model parameters $\bftheta$.

Among possible structures, the equilibrium structure is the one that attains the minimal energy, and is the most representative structure for the molecule. As mentioned, the diffusion formulation has been a preferred choice for equilibrium structure prediction, %
which samples from a distribution $p_\clG(\bfR)$ concentrated at the equilibrium structure.
For this, a primitive structure is sampled from a simple distribution, \eg, the standard Gaussian, which undergoes a diffusion process that transforms the simple distribution to the desired distribution. This is done by reversing the process in the opposite direction, which is easier to construct. For example, %
the process can be taken as the Langevin diffusion that converges %
to standard Gaussian~\citep{song2021score}:
    $\ud \bfR_t = \beta_t \nabla \log \clN(\bfR_t; \bfzro, \bfI) \dd t + \sqrt{\beta_t} \dd \bfW_t
    = -\frac{\beta_t}{2} \bfR_t \dd t + \sqrt{\beta_t} \dd \bfW_t,$
where $\beta_t$ is a time dilation factor~\citep{wibisono2016variational}, and $\bfW_t$ denotes the Wiener process. %
The process starts from the desired distribution $p_{\clG,0} = p_\clG$ at $t=0$ and ends after a sufficiently long period $T$ when the distribution converges, $p_T = \clN(\bfzro, \bfI)$.
The reverse process is known to follow~\citep{anderson1982reverse}:
\begin{align}
    \ud \bfR_\ttb = \frac{\beta_{T-\ttb}}{2} \bfR_\ttb \dd \ttb + \beta_{T-\ttb} \nabla \log p_{\clG,T-\ttb}(\bfR_\ttb) \dd \ttb + \sqrt{\beta_{T-\ttb}} \dd \bfW_\ttb,
    \label{eqn:rev-cont}
\end{align}
where $\ttb := T - t$, which transforms $\clN(\bfzro, \bfI)$ at $\ttb = 0$ to the desired distribution at $\ttb = T$, hence the generation process is constructed.
To simulate the process, the only unknown is the (Fisher's) score function $\nabla \log p_{\clG,t}(\bfR)$ at each diffusion instant, for which a machine-learning model $\bfss^{(\bftheta)}_{\clG,t}(\bfR)$ is introduced. To learn to fit $\nabla \log p_{\clG,t}(\bfR)$, a practical approach is by optimizing the denoising score matching loss~\citep{vincent2011connection}:
    $\bbE_{p_{\clG,0}(\bfR_0)} \bbE_{p(\bfR_t | \bfR_0)} \lrVert*{\bfss^{(\bftheta)}_{\clG,t}(\bfR_t) - \nabla_{\bfR_t} \log p(\bfR_t | \bfR_0)}^2,$
which is convenient since from the Langevin diffusion, we can derive
    $p(\bfR_t | \bfR_0) = \clN(\bfR_t; \sqrt{\alphab_t} \bfR_t, (1-\alphab_t) \bfI),$ %
    where $\alphab_t := \exp(-\int_0^t \beta_s \dd s),$
which is a known distribution. Leveraging the reparameterization trick, %
the loss is reformed as~\citep{song2021score}:
\begin{align}
    \setlength{\abovedisplayskip}{0pt}
    \bbE_{\Unif(t;0,T)} (1-\alphab_t) \bbE_{p_{\clG,0}(\bfR_0)} \bbE_{\clN(\bfeps_t; \bfzro, \bfI)} \lrVert{\bfss^{(\bftheta)}_{\clG,t}(\sqrt{\alphab_t} \bfR_0 + \sqrt{1-\alphab_t} \bfeps_t) + \frac{\bfeps_t}{\sqrt{1-\alphab_t}}}_2^2,
    \label{eqn:dsm-cont}
\end{align}
where the expectation w.r.t $\bbE_{p_{\clG,0}(\bfR_0)}$ can be estimated by averaging over data. %
Once the score model is trained, it can generate structures by replacing $\nabla \log p_{\clG,t}(\bfR)$ and simulating \eqnref{rev-cont}. If only $p_{\clG,0}(\bfR_0)$ is desired (instead of $p_\clG(\bfR_{0:T})$), %
then an equivalent simulation approach can be adopted known as probability-flow ODE~\citep{song2021score}:
\begin{align}
    \ud \bfR_\ttb = \frac{\beta_{T-\ttb}}{2} \big( \bfR_\ttb + \nabla \log p_{\clG,T-\ttb}(\bfR_\ttb) \big) \dd \ttb,
    \label{eqn:ode-cont}
\end{align}
which is equivalent to the deterministic process in denoising diffusion implicit model (DDIM)~\citep{song2021denoising}.

An alternative to the score-prediction formulation is the denoising formulation~\citep{kingma2021variational,daras2024consistent}.
By defining a ``denoising model'' $\bfS^{(\bftheta)}_{\clG,t}(\bfR_t)$ which formulates the score model following:
\begin{align}
    \bfss^{(\bftheta)}_{\clG,t}(\bfR_t) = \frac{\sqrt{\alphab_t} \bfS^{(\bftheta)}_{\clG,t}(\bfR_t) - \bfR_t}{1-\alphab_t},
    \label{eqn:score=denoising}
\end{align}
the training loss function \eqnref{dsm-cont} in terms of $\bfS^{(\bftheta)}_{\clG,t}(\bfR_t)$ becomes:
\begin{align}
    \bbE_t \frac{\alphab_t}{1-\alphab_t} \bbE_{p_{\clG,0}(\bfR_0)} \bbE_{\bfeps_t} \lrVert{\bfS^{(\bftheta)}_{\clG,t}(\sqrt{\alphab_t} \bfR_0 + \sqrt{1-\alphab_t} \bfeps_t) - \bfR_0}_2^2,
    \label{eqn:dsm-denoise-cont}
\end{align}
which follows the intuition to denoise a perturbed structure by predicting the original structure.
This formulation better aligns with the notion of ``structure prediction'', hence can be benefited from successful model architectures~\citep{watson2023novo,ingraham2023illuminating}, and matches structure pre-training strategies~\citep{zaidi2022pre,luo2023one,zhou2023unimol,wei2023diffusion}.

\vspace{-2pt}
\section{Method} \label{sec:method}
\vspace{-2pt}

We begin with the basic formulation of multi-task learning. We then present the two consistency training approaches between energy and structure prediction, based on two physical laws between the two tasks. The approach to directly leveraging physically-related datasets is described at last.

\vspace{-4pt}
\subsection{Multi-Task Learning for Energy and Structure Prediction} \label{sec:multi-task}

Both energy and structure prediction tasks require a comprehensive understanding of the input molecular graph $\clG$ and structure $\bfR$, so a shared encoder $\clE_{\clG,t}^{(\bfphi)}(\bfR)$ with parameters $\bfphi$ is employed.
The time step $t$ is required by the diffusion formulation, which is taken as 0 for energy prediction indicating the input structure is unperturbed and real. %
For energy and structure prediction, the corresponding decoders $\clD_\tnE^{(\bftheta_\tnE)}$ and $\clD_\tnS^{(\bftheta_\tnS)}$ are introduced.
For structure prediction, the denoising formulation is adopted (end of \secref{prelim}).
Under this formulation, the two tasks are handled by:
\begin{align}
    E_\clG^{(\bfphi,\bftheta_\tnE)}(\bfR) = \clD_\tnE^{(\bftheta_\tnE)} \big( \clE_{\clG,t=0}^{(\bfphi)}(\bfR) \big), \quad
    \bfS_{\clG,t}^{(\bfphi,\bftheta_\tnS)}(\bfR) = \clD_\tnS^{(\bftheta_\tnS)} \big( \clE_{\clG,t}^{(\bfphi)}(\bfR),\bfR \big).
    \label{eqn:model-composition}
\end{align}
On one datapoint $(\clG, \bfR, E)$, the multi-task loss is (\cf \eqnref{dsm-denoise-cont}): $ L_\multitask(\bfphi, \bftheta_\tnE, \bftheta_\tnS | \clG, \bfR, E)$
\begin{align}
    = \lambda_\tnE \lrvert{E_\clG^{(\bfphi,\bftheta_\tnE)}(\bfR) - E}
    + \bbE_t \frac{\alphab_t}{1-\alphab_t} \bbE_{\bfeps_t} \lrVert{\bfS^{(\bftheta)}_{\clG,t}(\sqrt{\alphab_t} \bfR + \sqrt{1-\alphab_t} \bfeps_t) - \bfR}_2^2.
    \label{eqn:loss-multi}
\end{align}
A subtlety with the models is geometric invariance and equivariance. Indeed, if a structure $\bfR$ is translated and rotated as a whole, the resulting atom coordinates represent essentially the same structure. The energy and the probability density should keep invariant after the transformation. For energy prediction, this can be guaranteed by using an invariant encoder $\clE_{\clG,t}^{(\bfphi)}(\bfR)$ (no requirement on $\clD_\tnE^{(\bftheta_\tnE)}$). For the probability density, it is known~\citep{kohler2020equivariant} that %
rotational invariance can be achieved by the invariance of $p_{\clG,T}$, which is satisfied by $\clN(\bfzro, \bfI)$, and the equivariance of the %
denoising model $\bfS_{\clG,t}^{(\bfphi,\bftheta_\tnS)}(\bfR)$. For this reason, the input structure $\bfR$ re-enters the structure decoder $\clD_\tnS^{(\bftheta_\tnS)}$, which is implemented with an equivariant architecture.
Translational invariance of density can be achieved by centering the structures; see ref.~\citep{yim2023diffusion} for reasoning.

Nevertheless, multi-task learning is restricted by the level of accuracy of training data. %
As mentioned, %
structure data are often generated in a lower level of accuracy than energy data due to the more demanding nature.
To alleviate this limitation, we exploit physical laws between molecular energy and structure, and propose consistency training losses accordingly to bridge the energy and structure models, thereby enhancing the accuracy of structure prediction from the more accurate energy model.

\subsection{Optimality Consistency} \label{sec:optim-csis}

One direct relationship between energy and equilibrium structure is that the equilibrium structure $\bfR_\texteq$ minimizes the energy, \ie, $\bfR_\texteq = \argmin_{\bfR} E_\clG(\bfR)$. To enforce this optimality condition, we propose an optimality consistency loss $L_\optim$, in the form of ``increase after perturbation'' loss. It is based on the idea that the energy of the equilibrium structure should be lower than that of its perturbed version. %
Denoting $\bfR_\pred^{(\bfphi,\bftheta_\tnS)}$ as the model-predicted equilibrium structure, the loss on a molecule $\clG$ can be written as:
\begin{align}
    L_\optim(\bftheta_\tnS \mid \bfphi, \bftheta_\tnE, \clG) = \bbE_{\bfeta} \max\Big\{ 0, \; E_\clG^{(\bfphi,\bftheta_\tnE)}(\bfR_\pred^{(\bfphi,\bftheta_\tnS)}) - E_\clG^{(\bfphi,\bftheta_\tnE)}(\bfR_\pred^{(\bfphi,\bftheta_\tnS)} + \bfeta) \Big\},
    \label{eqn:optim-csis-general}
\end{align}
where $\bfeta \sim \clN(\bfzro, \Diag(\bfsigma^2))$ is a small perturbation, and each element of $\bfsigma^2$ is independently sampled from $\Unif(0, \sigma_{\text{max}}^2]$. %
Since the purpose is to improve structure prediction accuracy by leveraging the energy model which has seen more accurate labels, so the consistency loss only optimizes the parameters exclusively for the structure prediction utility, \ie, structure decoder parameters $\bftheta_\tnS$. Other parameters $\bfphi$ and $\bftheta_\tnE$ do not optimize this loss. In this way, the consistency loss would not contaminate the energy prediction model with the less accurate structure prediction model.

The standard way to produce $\bfR_\pred^{(\bfphi,\bftheta_\tnS)}$ requires simulating the diffusion process (\eqnref{rev-cont}) or the equivalent ODE (DDIM) (\eqnref{ode-cont}), which calls the denoising model recursively. %
So optimizing $L_\optim$ would involve backpropagation through the simulation process, which can be impractically costly %
and numerically unstable.
Fortunately, we can exploit the intuition in the denoising formulation and find a much cheaper way to generate structure.
The intuition of the denoising model $\bfS_{\clG,t}(\bfR_t)$ is to recover the original structure $\bfR_0$ from the perturbed structure $\bfR_t$. Although this is informationally impossible at the instance level, the denoising model still has a definite learning target at the distributional level, which is %
$\bbE[\bfR_0 | \bfR_t]$~\citep{daras2024consistent}. Particularly, when $t=T$, the correlation between $\bfR_0$ and $\bfR_T$ diminishes, so $\bfS_{\clG,T}(\bfR_T)$ learns to output $\bbE[\bfR_0]$, the expectation of the target distribution, which is the equilibrium structure since the distribution concentrates at that structure. Under this perspective, the model-predicted structure can be generated by
    $\bfR_\pred^{(\bfphi,\bftheta_\tnS)} = \bfS_{\clG,T}^{(\bfphi,\bftheta_\tnS)}(\bfR_T),$
    where $\bfR_T \sim \clN(\bfzro, \bfI).$
This only requires one evaluation of the denoising model.
Nevertheless, the rotational invariance of the structure distribution introduces more subtleties.
\begin{proposition}\label{equal_variant_denoiser}
    Let $\bfS^{(\bftheta)}: \bbR^{A \times 3} \to \bbR^{A \times 3}$ be a rotationally equivariant function; that is, for any rotation matrix $\bfQ \in \mathrm{SO}(3)$ and structure $\bfR \in \bbR^{A \times 3}$, we have $\bfS^{(\bftheta)}(\bfR \bfQ) = \bfS^{(\bftheta)}(\bfR) \bfQ$. Then, for any target structure $\bfR^{\star} \in \bbR^{A \times 3}$, the minimizer of the denoising loss function $L(\bftheta) = \bbE_{\clN(\bfeps; \bfzro, \bfI)} \lrVert*{ \bfS^{(\bftheta)}(\bfeps) - \bfR^{\star} }_2^2$ is the zero map; that is, $\bfS^{(\bftheta)}(\bfR) = \bfzro$, for any $\bfR$.
\end{proposition}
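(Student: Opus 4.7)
The plan is to exploit the rotational invariance of the standard Gaussian to show that, for any rotationally equivariant $\bfS^{(\bftheta)}$, the cross term in the squared-error loss vanishes, leaving an expression that is minimized exactly by the zero map. First I would expand
\[ L(\bftheta) = \bbE_{\bfeps} \lrVert{\bfS^{(\bftheta)}(\bfeps)}_2^2 - 2 \bbE_{\bfeps} \langle \bfS^{(\bftheta)}(\bfeps), \bfR^\star\rangle + \lrVert{\bfR^\star}_2^2, \]
where $\langle \cdot, \cdot \rangle$ denotes the Frobenius inner product. The only nontrivial term to control is the middle one.

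To kill the cross term, I would pick an arbitrary $\bfQ \in \mathrm{SO}(3)$ and apply the change of variables $\bfeps \mapsto \bfeps \bfQ$, which preserves the law $\clN(\bfzro, \bfI)$ because the isotropic Gaussian on $\bbR^{A \times 3}$ is right-rotation invariant. Combined with equivariance $\bfS^{(\bftheta)}(\bfeps\bfQ) = \bfS^{(\bftheta)}(\bfeps)\bfQ$, this would yield
\[ \bbE_{\bfeps} \langle \bfS^{(\bftheta)}(\bfeps), \bfR^\star\rangle = \bbE_{\bfeps} \langle \bfS^{(\bftheta)}(\bfeps)\bfQ, \bfR^\star\rangle \]
for every $\bfQ \in \mathrm{SO}(3)$. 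I would then integrate the right-hand side against the Haar measure on $\mathrm{SO}(3)$ and pull the $\bfQ$-expectation inside the inner product via Fubini and linearity, producing the factor $\bbE_{\bfQ}[\bfQ]$ sandwiched between $\bfS^{(\bftheta)}(\bfeps)$ and $\bfR^\star$.

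The main technical step is to verify $\bbE_{\bfQ \sim \mathrm{Haar}(\mathrm{SO}(3))}[\bfQ] = \bfzro$. I would argue this from left invariance of the Haar measure: setting $M := \bbE[\bfQ]$, the identity $\bfR M = \bbE[\bfR \bfQ] = M$ holds for every $\bfR \in \mathrm{SO}(3)$, so each column of $M$ is a vector in $\bbR^3$ fixed by every rotation, forcing $M = \bfzro$. With the cross term annihilated, the loss reduces to $L(\bftheta) = \bbE_{\bfeps} \lrVert{\bfS^{(\bftheta)}(\bfeps)}_2^2 + \lrVert{\bfR^\star}_2^2$, minimized precisely when $\bfS^{(\bftheta)}(\bfeps) = \bfzro$ almost surely under $\clN(\bfzro, \bfI)$. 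Since the Gaussian has full support in $\bbR^{A \times 3}$ and the neural parameterization is continuous, this promotes to the pointwise conclusion $\bfS^{(\bftheta)}(\bfR) = \bfzro$ for every $\bfR$, as asserted.
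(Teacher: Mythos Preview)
Your proof is correct and relies on the same two ingredients as the paper's argument: the right-rotation invariance of the isotropic Gaussian and the identity $\bbE_{\bfQ \sim \mathrm{Haar}}[\bfQ] = \bfzro$. The only difference is organizational---you expand the square and annihilate the cross term directly, whereas the paper first rewrites $L(\bftheta) = \bbE_{\bfeps}\bbE_{\bfQ}\lrVert{\bfS^{(\bftheta)}(\bfeps) - \bfR^\star \bfQ\trs}_2^2$ and then invokes the standard ``minimizer of expected squared error is the mean'' to get $\bfS^{(\bftheta)}(\bfeps) = \bfR^\star \bbE_{\bfQ}[\bfQ\trs] = \bfzro$; both routes are equivalent once expanded.
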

See \appxref{proof_equal_variant_denoiser} for proof.
This conclusion reveals that the learning target of the denoising model at $T$ is trivially all-zero, which cannot serve to generate the equilibrium structure.
To circumvent this, a simple choice is to denoise from a time step $\tau$ that is close but smaller than $T$:
\begin{align}
    \bfR_\pred^{(\bfphi,\bftheta_\tnS)} = \bfS_{\clG,\tau}^{(\bfphi,\bftheta_\tnS)}(\bfeps), \quad
    \bfeps \sim \clN(\bfzro, \bfI). \quad
    \text{(for large $\tau$)}
    \label{eqn:denoise-gen}
\end{align}
The target of the denoising model at $\tau$ is $\bfS_{\clG,\tau}^{(\bfphi,\bftheta_\tnS)}(\bfeps) = \bbE[\bfR_0 | \bfR_\tau = \bfeps]$, which is equivariant w.r.t $\bfR_\tau$. So the input $\bfR_\tau$ provides an orientation information hence breaking the rotational symmetry of the corresponding distribution $p(\bfR_0 | \bfR_\tau)$. The resulting expectation then would not average a structure over orientations evenly, hence not zero.
More explicitly, since $p(\bfR_0 | \bfR_\tau) \propto p(\bfR_0) p(\bfR_\tau | \bfR_0) \propto p(\bfR_0) \exp \left\{-\frac{\| \bfR_0 - \bfR_\tau / \sqrt{\alphab_\tau} \|_2^2}{2(1/\alphab_\tau - 1)} \right\} $, we have $p(\bfR_0 | \bfR_\tau) \propto  p(\bfR_0) \clN(\bfR_0; \bfR_\tau / \sqrt{\alphab_\tau}, (1/ \alphab_\tau - 1) \bfI)$, where the Gaussian factor assigns larger probability along the direction of $\bfR_\tau$, hence breaks the rotational symmetry from $p(\bfR_0)$. %
Under this choice, the optimality consistency loss in \eqnref{optim-csis-general} is specified as: $L_\optim(\bftheta_\tnS \mid \bfphi, \bftheta_\tnE, \clG)$
\begin{align}
    = \bbE_{\bfeta} \bbE_{\bfeps} \max\Big\{ 0, \; E_\clG^{(\bfphi,\bftheta_\tnE)} \big( \bfS_{\clG,\tau}^{(\bfphi,\bftheta_\tnS)}(\bfeps) \big) - E_\clG^{(\bfphi,\bftheta_\tnE)} \big( \bfS_{\clG,\tau}^{(\bfphi,\bftheta_\tnS)}(\bfeps) + \bfeta \big) \Big\}. \quad
    \text{(for large $\tau$)}
    \label{eqn:optim-csis}
\end{align}

\subsection{Score Consistency} \label{sec:score-csis}

The optimality consistency loss only supervises the denoising model at large time steps. For small time steps, an alternative perspective on the physical law between equilibrium structure and energy can help.
Physically, a molecule $\clG$ in a real system can take different structures with different probabilities. When the system is in thermodynamic equilibrium, the probability distribution of the structures can be determined from the energy function of the molecule.
Particularly, in a system with fixed volume and temperature $\clT$, the structure distribution is the well-known Boltzmann distribution,
$p_{\tnB;\clG,\clT}(\bfR) \propto \exp\left(-E_\clG(\bfR) / (k_\tnB \clT) \right)$, where $k_\tnB$ is the Boltzmann constant.
When temperature approaches zero, the distribution becomes concentrated on the equilibrium structure. %
This aligns with the learning target of the diffusion model for equilibrium structure prediction. Through the expression of the Boltzmann distribution, the structure model can thus be connected to the energy model.

To enforce this connection, ideally, the density function $p_\clG^{(\bfphi,\bftheta_\tnS)}(\bfR)$ modeled by the denoising model should match that defined by the energy model. Since the latter only provides an unnormalized density, we enforce their scores to match:
$\bbE_{q(\bfR)} \lrVert*[\big]{ \nabla \log p_\clG^{(\bfphi,\bftheta_\tnS)}(\bfR) + \nabla E_\clG^{(\bfphi,\bftheta_\tnE)}(\bfR) / (k_\tnB \clT) }_2^2$,
where $q(\bfR)$ is a reference distribution.
Nevertheless, it is computationally costly to evaluate the density function from the diffusion model:
$\log p_\clG^{(\bfphi,\bftheta_\tnS)}(\bfR) = \log p_T(\bfR_T^{(\bfphi,\bftheta_\tnS)}) + \int_0^T \frac{\beta_t}2 \frac{\sqrt{\alphab_t}}{1-\alphab_t} \lrparen{3 A \sqrt{\alphab_t} - \nabla \cdot \bfS_{\clG,t}^{(\bfphi,\bftheta_\tnS)}(\bfR_t^{(\bfphi,\bftheta_\tnS)})} \dd t$,
where $\bfR_{t \in [0,T]}^{(\bfphi,\bftheta_\tnS)}$ is the solution to the ODE $\fracdiff{\bfR_t}{t} = \frac{\beta_t}2 \frac{\sqrt{\alphab_t}}{1-\alphab_t} \lrparen{\sqrt{\alphab_t} \bfR_t - \bfS_{\clG,t}^{(\bfphi,\bftheta_\tnS)}(\bfR_t)}$ with initial condition $\bfR_0 = \bfR$~\citep{song2021score}.
Significant computational cost would be incurred from invoking and backpropagating through an ODE solver to evaluate and optimize the density.

We hence turn to another way to leverage this connection.
Note from \eqnref{score=denoising}, the denoising model can be used to recover the score model, which targets the score function of the marginal distribution at the corresponding diffusion time instant. Particularly, the score model at $t=0$ should approximate the score of the desired distribution, which is $p_{\tnB; \clG,\clT}$ for small $\clT$. The energy model can hence provide supervision to the score model by enforcing this connection: $L_\score(\bftheta_\tnS \mid \bfphi, \bftheta_\tnE, \clG)$
\begin{align}
    = \bbE_{p_\tau(\bfR_\tau)} \lrVert*[\bigg]{ \frac{\sqrt{\alphab_\tau} \bfS_{\clG,\tau}^{(\bfphi,\bftheta_\tnS)}(\bfR_\tau) - \bfR_\tau}{1-\alphab_\tau} + \frac{\nabla_{\bfR_\tau} \bfE_\clG^{(\bfphi,\bftheta_\tnE)}(\bfR_\tau)}{k_\tnB \clT} }_2^2. \quad
    \text{(for small $\tau$)}
    \label{eqn:score-csis}
\end{align}
In this score consistency loss, we have avoided taking the time step $\tau$ exactly zero for numerical stability consideration. Indeed, when $\tau \to 0$, the denoising model $\bfS_{\clG,\tau}^{(\bfphi,\bftheta_\tnS)}$ approaches the identity map, and $\alphab_\tau$ approaches 1, making the first term in \eqnref{score-csis} an indeterminate form of type 0/0, which may render numerical stability issues.
For the reference distribution to generate data to evaluate the loss, one can choose either the data distribution $p_{\clG,0}$ for which the energy model gives more confident results, or the perturbed distribution $p_{\clG,\tau}$ (can be sampled by adding noise to a data sample $\bfR_0$ following $p(\bfR_t | \bfR_0)$) for relevance to how the denoising model $\bfS_{\clG,\tau}^{(\bfphi,\bftheta_\tnS)}$ is invoked. Through some trials, we found the latter gives slightly better results. %

Finally, as is the case for the optimality consistency loss, the score consistency loss also only optimizes the parameters $\bftheta_\tnS$ of the structure decoder, to ensure the energy model would not be misled by the less accurate structure data.
To implement this unconventional optimization requirement, we list detailed algorithms for the two consistency losses in \appxref{consis-alg} in terms of the actual model components $\clE_{\clG,t}^{(\bfphi)}$, $\clD_\tnE^{(\bftheta_\tnE)}$, and $\clD_\tnS^{(\bftheta_\tnS)}$.

\subsection{Leveraging Physically-Related Data} \label{sec:data-fusion}

Besides the difference in the level of theory to generate data, the heterogeneity of molecular-science datasets also lies in the difference of concerned quantities. Compared to the enormous chemical space, available datasets for equilibrium structure are still not abundant, while there is a vast amount of data generated for physically related but different tasks, for example, labels of atomic forces, and data on off-equilibrium structures.
We highlight that the consistency losses can leverage such datasets in an explicit way to further improve equilibrium structure prediction.
Note that the consistency losses \eqnsref{optim-csis,score-csis} works by offering the information at a higher level of theory in the energy model, in the form of \emph{energy landscape} on the structure space; \ie, ranking different structures in optimality consistency, and providing energy gradient in score consistency.
For better learning the landscape, the force labels, which are negative gradients of the energy, provides first-order information of the landscape, and energy and force labels on multiple off-equilibrium structures enable better exploration on the structure space. %
This approach provides a more direct and concrete information path to equilibrium structure prediction than helping learn a better representation in multi-task learning.
For learning a better energy landscape, the force labels are used to directly supervise the gradient of the energy model. The loss term for a datapoint $(\clG,\bfR,\bfF)$ is:
\begin{align}
    L_\textnormal{force}(\bfphi, \bftheta_\tnE \mid \bfR, \bfF) = \lrVert*[\big]{\nabla_\bfR E_\clG^{(\bfphi,\bftheta_\tnE)}(\bfR) + \bfF}_2^2,
    \label{eqn:loss-force}
\end{align}
where $\bfF$ is the force label. Note that there may be multiple $(\bfR,\bfF)$ data pairs for one molecule $\clG$, which provide even richer information on the energy landscape.

\vspace{-2pt}
\section{Experiments} \label{sec:expm}
\vspace{-2pt}
In this section, we demonstrate the advantages of incorporating the proposed consistency losses into multi-task learning. %
Implementation details are provided in \appxref{imple_details}.

\subsection{Setup} \label{sec:exp_setup}

\paragraph{Datasets.}
We consider multi-task learning of energy and structure prediction on the PubChemQC B3LYP/6-31G*//PM6 dataset~\citep{nakata2023pubchemqc} (abbreviated as PM6), which is seemingly the largest ($\sim$86M molecules) public available dataset with DFT-level property labels, hence a preferred setting for pre-training a molecular model.
The energy labels are in the DFT (B3LYP/6-31G*) level, while the equilibrium structures are produced at the semi-empirical PM6~\citep{stewart2007optimization} level, which is less accurate than DFT.
Consistency training is hence considered to improve structure prediction accuracy using the more accurate energy data.
To evaluate the effect of improved structure prediction accuracy beyond the PM6 level, the accuracy is evaluated against structures generated at the DFT level, which are available in the PCQM4Mv2 dataset~\citep{hu2021ogb} (abbreviated as PCQ) and the QM9 dataset~\citep{ramakrishnan2014quantum}.

\vspace{-2pt}
\paragraph{Evaluation.}
Each of the evaluation datasets of PCQ and QM9 is spilt into three disjoint sets for training, validation, and test. The training and validation sets are for optional fine-tuning (see \secref{finetune}). Following existing convention~\citep{pmlr-v139-shi21b,xu2022geodiff,zhu2022direct}, each test set is prepared by uniformly randomly selecting 200 distinct molecules from PCQ or QM9 that do not appear in the training dataset (PM6), which already makes the test molecules sufficiently dissimilar from training molecules (\appxref{dissimilar}).

On each test molecule, we sample 200 structures using the model, calculate their rooted mean square deviations (RMSDs) against the equilibrium structure in the test set, and evaluate the mean and the minimum over these RMSDs.
Due to the geometric invariance of the structure distribution, the RMSD is evaluated after translational and rotational alignment of two structures using the Kabsch algorithm~\citep{kabsch1976solution}.
We consider both the denoising (\eqnref{denoise-gen}) and the DDIM (\eqnref{ode-cont}) approaches for structure sampling.
We also provide coverage evaluation results in \appxref{coverage}.

In each setting, we independently repeat the evaluation process for five times using different random seeds, and report the mean of the repeats in the following tables. The standard deviations and t-test p-values are collectively provided in \appxref{std}.
In settings using consistency training, both the optimality (\eqnref{optim-csis}) and score consistency losses (\eqnref{score-csis}) are added to the multi-task training loss (\eqnref{loss-multi}).
Validation results for training in terms of both energy prediction and structure generation are provided in \appxref{validation}.

\subsection{Structure Prediction Results} \label{sec:wo_force}

\begin{table}[b]
\centering
\caption{Test RMSD (\angstrom; lower is better) of structure prediction by multi-task learning and consistency learning on PM6 dataset.}
\label{tab:vanilla_test}
\begin{tabular}{rcccccccc}
\toprule
Test Set & \multicolumn{4}{c}{PCQ} & \multicolumn{4}{c}{QM9} \\ \cmidrule(lr){2-5} \cmidrule(lr){6-9}
Generated by & \multicolumn{2}{c}{Denoising} & \multicolumn{2}{c}{DDIM} & \multicolumn{2}{c}{Denoising} & \multicolumn{2}{c}{DDIM} \\
\cmidrule(lr){2-3} \cmidrule(lr){4-5} \cmidrule(lr){6-7} \cmidrule(lr){8-9}
Struct. Stat. & Mean & Min & Mean & Min & Mean & Min & Mean &  Min  \\ \midrule
Multi-Task & 1.189 & 0.655 & 1.041 & 0.361 & 0.928 & 0.545 & 0.669 & 0.197 \\
Consistency & \textbf{1.158} & \textbf{0.645} & \textbf{1.007} & \textbf{0.346 }& \textbf{0.848} & \textbf{0.490 }& \textbf{0.650 }&\textbf{ 0.194} \\ \bottomrule
\end{tabular}
\end{table}
We first evaluate the effect of consistency training over multi-task training following the above settings.
The results are shown in \tabref{vanilla_test}.
We see that consistency training enhances the accuracy of structure prediction beyond multi-task learning consistently, without using any more accurate structure data.
The improvement is significant when compared to the standard deviations provided in Appendix \tabref{std_pre}, which are as low as around 0.003~$\angstrom$ (Appendix \tabref{t-test} verifies t-test significance).
We note that this improvement is not at the cost of a lower energy prediction accuracy, as indicated by the energy prediction results in Appendix \tabref{energy_valid}(left). %

To further examine that the improvement is from the effect of consistency training, we evaluate the energy of the predicted structure and the true DFT-level equilibrium structure in the PCQ dataset for each test molecule using the model, which is shown in the scatter plot of \figref{energy_ana}(left).
We observe that when using consistency training, the overall energy is reduced, indicating that the predicted structures indeed have lower energy hence closer to the true DFT-level equilibrium structure.
To quantify this improvement, in \appxref{egap} we define an ``energy gap'' metric, and the results shown in \tabref{egap} consolidate the observation.

\begin{figure}[t]
\centering
\begin{tabular}{ccc}
\includegraphics[width=.3\textwidth]{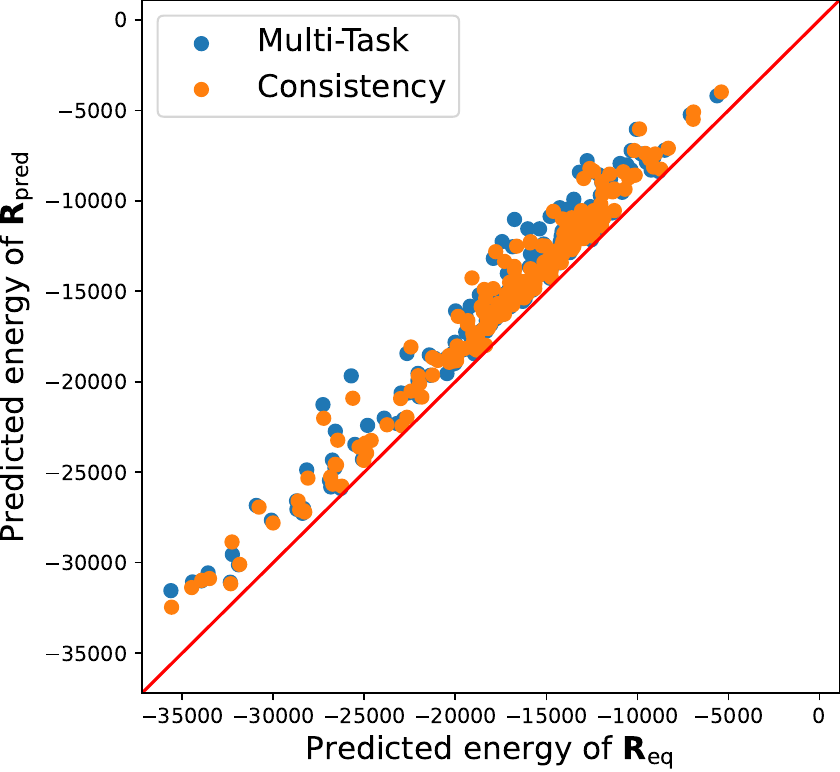} &
\includegraphics[width=.3\textwidth]{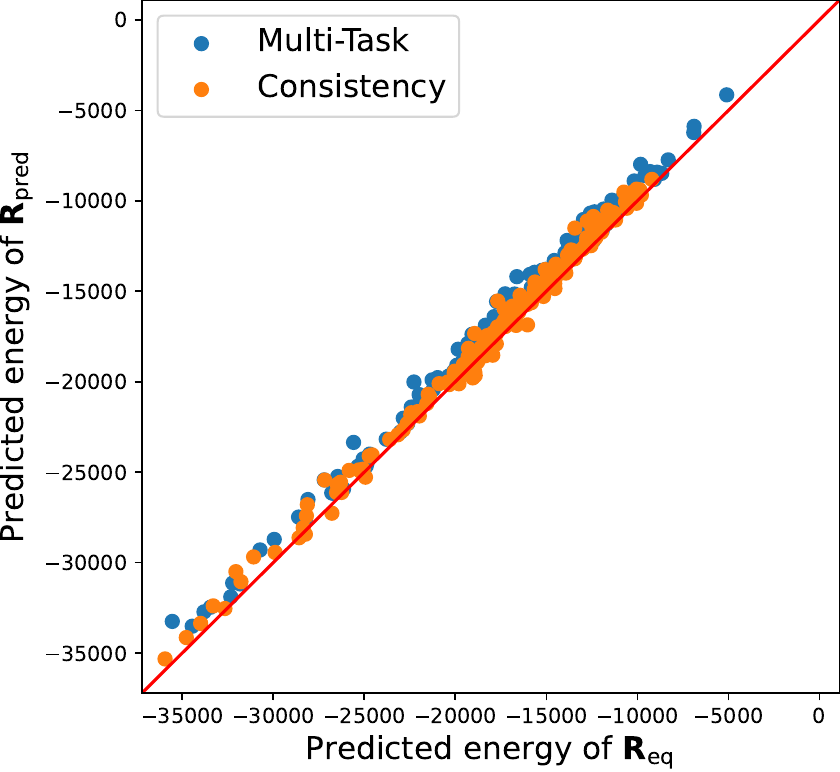} &
\includegraphics[width=.3\textwidth]{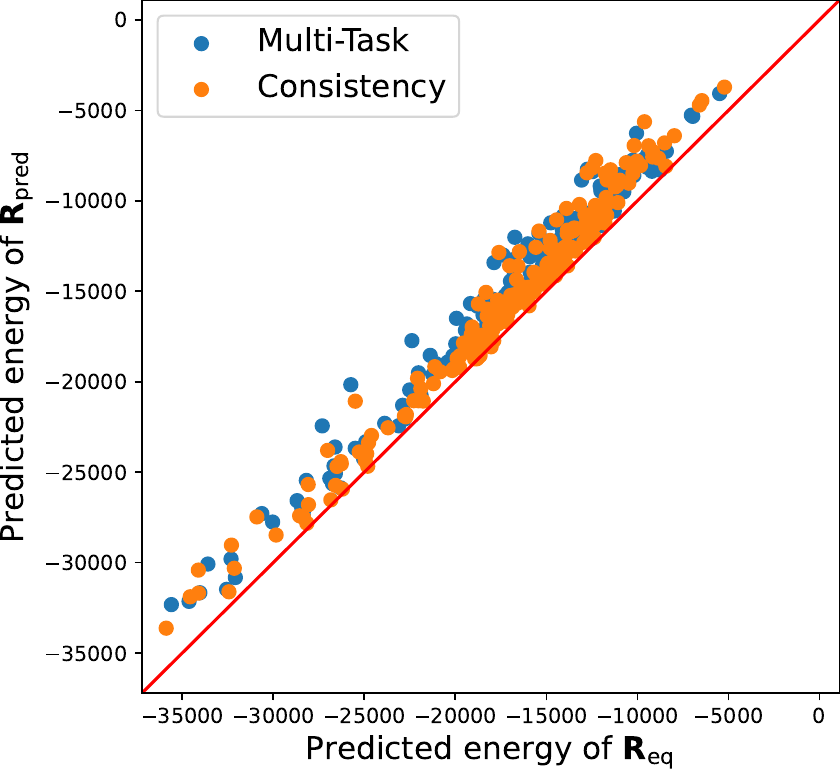} \\
\end{tabular}
\caption{Comparison of energy (eV) on the model-generated structure $\bfR_{\text{pred}}$ using the denoising method and the equilibrium structure $\bfR_{\text{eq}}$ in the PCQ dataset. Each point represents the model-predicted energy values on the two structures for one test molecule. Models are trained on (\textbf{left}) the PM6 dataset, (\textbf{middle}) the PM6 dataset and SPICE force dataset, and (\textbf{right}) the PM6 dataset with a subset of force labels. The closer a point lies to the diagonal line, the closer the energy of the predicted structure is to the minimum energy, indicating a closer prediction of equilibrium structure.
}
\label{fig:energy_ana}
\end{figure}

\vspace{-2pt}
\subsection{Results using Physically-Related Data} \label{sec:force_fusion}
\vspace{-2pt}

We next investigate the effect of consistency training for leveraging physically related data, as explained in \secref{data-fusion}. For this, we consider the force data in the SPICE dataset (PubChem subset)~\citep{eastman2023spice}. The force data are also available on multiple off-equilibrium structures for each molecule.
We note the subtlety that the setting of DFT in SPICE, $\omega$B97M-D3(BJ)/def2-TZVPPD, is different from that for generating energy labels in the PM6 dataset. (Up to our knowledge, there does not seem to exist a public force dataset that matches the DFT setting as the PM6 dataset.)
On one hand, calculation on near-equilibrium structures of small molecules is not very sensitive to DFT settings, especially for force calculation (even less affected than energy), while the force labels on multiple structures could be more valuable to learning the energy landscape despite the mismatch. So we still consider it as a relevant investigation setting. Note energy labels in SPICE are not used, and additional structures are not used for training the structure decoder.
On the other hand, to reduce the gap, we also generated in-house force labels on a subset of PM6 structures using the same DFT setting (B3LYP/6-31G*) as for the PM6 dataset energy labels. The systematic error is controlled, although for each molecule there is only one labeled structure.

The results after incorporating SPICE force data and PM6 subset force data in training are shown in \tabref{pretrain-spice-and-pm6-force}.
We first observe that consistency training still outperforms multi-task training in structure prediction in all cases (see Appendix Tables~\ref{tab:std_pre} and~\ref{tab:t-test} for standard derivations and t-test p-values).
We note that this improvement is not at the cost of a lower energy prediction accuracy, as indicated by Appendix \tabref{energy_valid}(middle) indicates energy prediction is also not compromised in consistency training in this case.
When compared to \tabref{vanilla_test}, we see that the inclusion of force data does not uniformly enhance multi-task learning performance, since the mechanism to learn a better representation is still implicit and indirect and may require extensive tuning. %
In contrast, using consistency losses improves structure prediction more consistently when physically related data are available in training.
These observations indicate that consistency loss training can potentially assist the model in more effectively utilizing data from different sources or modalities.

Energy analysis is presented in \figref{energy_ana}(middle) utilizing SPICE force data, and in \figref{energy_ana}(right) with force labels on a subset of PM6 molecules. The data indicate that training with consistency loss results in lower predicted energies than multi-task training. Furthermore, we observe that the structures predicted by models trained with force datasets (\figref{energy_ana}, middle and right) have lower predicted energies compared to those trained exclusively on the PM6 dataset (\figref{energy_ana}, left), illustrating the advantage of incorporating force labels.

\begin{table}[t]
\setlength{\tabcolsep}{5pt}
\centering
\caption{Test RMSD (\angstrom; lower is better) of structure prediction by multi-task learning and consistency learning on the PM6 dataset \emph{with additional SPICE force dataset or PM6 subset force data}.}
\label{tab:pretrain-spice-and-pm6-force}
\begin{tabular}{r|rcccccccc}
\toprule
\multirow{3}{*}{\makecell[cr]{Additional \\ Training Data}}
& Test Set & \multicolumn{4}{c}{PCQ} & \multicolumn{4}{c}{QM9} \\ \cmidrule(lr){3-6} \cmidrule(lr){7-10}
& Generated by & \multicolumn{2}{c}{Denoising} & \multicolumn{2}{c}{DDIM} & \multicolumn{2}{c}{Denoising} & \multicolumn{2}{c}{DDIM} \\
\cmidrule(lr){3-4} \cmidrule(lr){5-6} \cmidrule(lr){7-8} \cmidrule(lr){9-10}
& Struct. Stat. & Mean & Min & Mean & Min & Mean & Min & Mean &  Min \\
\midrule
\multirow{2}{*}{\makecell[cr]{SPICE \\ force}}
& Multi-Task & 1.161 & 0.631 & 1.047 & 0.373 & 0.876 & 0.486 & 0.670 & 0.207 \\
& Consistency & \textbf{1.147} & \textbf{0.590} & \textbf{1.013} & \textbf{0.345} & \textbf{0.842} & \textbf{0.485} & \textbf{0.644} & \textbf{0.194} \\
\midrule
\multirow{2}{*}{\makecell[cr]{PM6 subset \\ force}}
& Multi-Task & 1.199 & 0.672 & 1.027 & 0.365 & 0.914 & 0.545 & 0.648 & 0.193 \\
& Consistency & \textbf{1.113} & \textbf{0.629} & \textbf{1.019} & \textbf{0.351} & \textbf{0.836} & \textbf{0.488} & \textbf{0.646} & \textbf{0.192}  \\ \bottomrule
\end{tabular}
\end{table}

\begin{table}[b]
\centering
\caption{Test RMSD (\angstrom; lower is better) \emph{after finetuning} for structure prediction pre-trained by multi-task learning and consistency learning on the PM6 dataset.}
\label{tab:finetune_test}
\begin{tabular}{rcccccccc}
\toprule
Test Set & \multicolumn{4}{c}{PCQ} & \multicolumn{4}{c}{QM9} \\ \cmidrule(lr){2-5} \cmidrule(lr){6-9}
Generated by & \multicolumn{2}{c}{Denoising} & \multicolumn{2}{c}{DDIM} & \multicolumn{2}{c}{Denoising} & \multicolumn{2}{c}{DDIM} \\
\cmidrule(lr){2-3} \cmidrule(lr){4-5} \cmidrule(lr){6-7} \cmidrule(lr){8-9}
Struct. Stat. & Mean & Min & Mean & Min & Mean & Min & Mean &  Min  \\ \midrule
Multi-Task & 1.158 & 0.614 & 0.921 & 0.220 & 0.889 & 0.467 & 0.501 & 0.090\\
Consistency & \textbf{1.152} & \textbf{0.610} & \textbf{0.918} & \textbf{0.218} & \textbf{0.835} & \textbf{0.420} & \textbf{0.493} & \textbf{0.076} \\ \bottomrule
\end{tabular}
\end{table}

\subsection{Fine-Tuning Results} \label{sec:finetune}

In addition to the zero-shot prediction evaluations, we further fine-tune the pre-trained models investigated in \secref{wo_force} using DFT-level structures in PCQ or QM9 training datasets. The results presented in \tabref{finetune_test} indicate that the inclusion of consistency loss in pre-training still enhances the accuracy of structure prediction even after fine-tuning. See Appendix Tables~\ref{tab:std_fine} and~\ref{tab:t-test} for statistical significance.
This could be attributed to that using consistency losses in pre-training can already inform the model of more accurate structure information, leading to a state that is more relevant to the underlying physics, which is a favored starting point for further improvements through fine-tuning. Results of fine-tuning models that are pre-trained with SPICE force and PM6 subset force are shown in \appxref{finetune_force}, which further demonstrates the advantages of the consistency loss.

\vspace{-2pt}
\section{Conclusion and Discussion} \label{sec:concl}
\vspace{-2pt}
This work leverages physical laws between molecular tasks to bridge data heterogeneity in multi-task learning.
Consistency losses are designed to enforce physical laws between inter-atomic potential energy prediction and equilibrium structure prediction.
They have shown to improve structure prediction beyond the typical accuracy level of structure data by leveraging abundant energy data in a higher level of accuracy, and can directly leverage force and off-equilibrium structure data to further improve the accuracy. The advantage still holds after finetuning. %
We would like to highlight that the improvement comes ``for free'' as no additional data (\eg, more accurate structure data) are required, demonstrating the value of physical laws in learning molecular tasks.
The idea bears broader generality as data heterogeneity is ubiquitous in the science domain, and data for a specific task are often limited in either abundancy or accuracy.

The current work is limited to the consistency between energy and structure prediction, while more consistency laws can be considered in molecular science. Apart from mentioned works in connecting energy and thermodynamic distribution, more possibilities include electronic structure and molecular properties, and fine-grained and coarse-grained structures and macroscopic statistics.
The significance of improvement in this work is still limited by the abundance of the data involved. Further improvement can be expected with more abundant/diverse but possibly less accurate structure data from, \eg, RDKit~\citep{landrum2013rdkit} or experimental measurements, or energy/force datasets in matching level of theory that more extensively explore the structural space.

\bibliography{main}
\bibliographystyle{unsrtnat}

\newpage
\appendix
\section*{Appendix}

\numberwithin{equation}{section}
\numberwithin{figure}{section}
\numberwithin{table}{section}
\numberwithin{algorithm}{section}
\numberwithin{theorem}{section}

\section{Proof of Proposition~\ref{equal_variant_denoiser}} \label{appx:proof_equal_variant_denoiser}

We first prove the following two preliminary Lemmas.

\begin{lemma}\label{gaussion_rotation_invariant}
    Let $\bfeps = \left[ \bfeps_1\trs,\dots,\bfeps_A\trs \right]\trs$, and $\bfeps_i$ are independent, identically distributed random vectors with $\bfeps_i \sim \clN(\bfzro, \bfI_3)$. $\bfQ$ is a random variable uniformly distributed over $\mathrm{SO}(3)$. Then the random matrix $\bfgamma = \bfeps \bfQ$ has the same distribution as $\bfeps$.
\end{lemma}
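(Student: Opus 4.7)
The plan is a two-step reduction: first verify the statement when $\bfQ$ is a deterministic element of $\mathrm{SO}(3)$, then extend to random $\bfQ$ by conditioning. Both steps rest on the rotational invariance of the isotropic Gaussian in $\bbR^3$.

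For the deterministic step, I would fix any $\bfQ_0 \in \mathrm{SO}(3)$ and view $\bfeps \in \bbR^{A \times 3}$ as a matrix whose $i$-th row is $\bfeps_i\trs$. The $i$-th row of $\bfeps \bfQ_0$ is then $\bfeps_i\trs \bfQ_0$, which is a linear image of a Gaussian and hence itself Gaussian; its mean is $\bfzro$ and its covariance matrix is $\bfQ_0\trs \bfI_3 \bfQ_0 = \bfI_3$ since $\bfQ_0$ is orthogonal, so each row remains $\clN(\bfzro, \bfI_3)$ distributed. Independence across rows is preserved because $\bfQ_0$ acts on the right, so each row of $\bfeps \bfQ_0$ depends only on the corresponding $\bfeps_i$. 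Therefore $\bfeps \bfQ_0$ has the same joint distribution as $\bfeps$.

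For the random step, let $\mu$ denote the distribution of $\bfQ$, which is independent of $\bfeps$. For any measurable $B \subseteq \bbR^{A \times 3}$, conditioning on $\bfQ$ yields
\begin{equation*}
    \Pr(\bfgamma \in B) = \int_{\mathrm{SO}(3)} \Pr(\bfeps \bfQ_0 \in B) \dd \mu(\bfQ_0) = \int_{\mathrm{SO}(3)} \Pr(\bfeps \in B) \dd \mu(\bfQ_0) = \Pr(\bfeps \in B),
\end{equation*}
where the middle equality applies the deterministic-$\bfQ$ result pointwise. Since $B$ is arbitrary, this gives $\bfgamma$ and $\bfeps$ equal in distribution.

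I do not anticipate any substantive obstacle: the argument is essentially the standard fact that isotropic Gaussians are rotation-invariant, applied row by row, followed by a one-line conditioning step. The only bookkeeping detail worth emphasizing is the right-multiplication convention in $\bfeps \bfQ$, which rotates each atomic coordinate row $\bfeps_i \in \bbR^3$ independently and keeps the row-wise decomposition transparent; this is also precisely what allows the lemma to plug into the zero-minimizer argument for the rotationally equivariant denoiser in the parent proposition.
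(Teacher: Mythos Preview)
Your proposal is correct and follows essentially the same approach as the paper: both condition on $\bfQ$, verify that for each fixed rotation the isotropic Gaussian is preserved, and then integrate out $\bfQ$. The only cosmetic difference is that the paper writes out the Gaussian density and uses $\lVert \bfQ\trs \bfgamma_i \rVert_2 = \lVert \bfgamma_i \rVert_2$ directly, whereas you argue via the covariance $\bfQ_0\trs \bfI_3 \bfQ_0 = \bfI_3$; these are interchangeable formulations of the same rotational invariance fact.
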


\begin{proof}
    Consider the probability density function of $\bfgamma$:
    \begin{align}
        p_{\bfgamma}(\bfgamma) = \int p_{\bfgamma,\bfQ}(\bfgamma, \bfQ) \tndd \bfQ = \int p_{\bfQ}(\bfQ) p_{\bfgamma \mid \bfQ}(\bfgamma \mid \bfQ) \tndd \bfQ.
    \end{align}
    Since $\bfgamma = \bfeps \bfQ$, we have:
    \begin{align}
        p_{\bfgamma \mid \bfQ}(\bfgamma \mid \bfQ) = \prod_{i=1}^A \frac{1}{(2\pi)^{3/2}} \exp\left\{ -\frac{\|\bfQ\trs \bfgamma_i \|_2^2}{2} \right\} = \frac{1}{(2\pi)^{3/2}} \prod_{i=1}^A \exp\left\{ -\frac{\| \bfgamma_i \|_2^2}{2} \right\} = p_{\bfeps}(\bfgamma).
    \end{align}
    Thus, it follows that:
\begin{align}
        p_{\bfgamma}(\bfgamma) = \int p_{\bfQ}(\bfQ) p_{\bfgamma \mid \bfQ}(\bfgamma \mid \bfQ) \tndd \bfQ = \int p_{\bfQ}(\bfQ) p_{\bfeps}(\bfgamma) \tndd \bfqq = p_{\bfeps}(\bfgamma),
    \end{align}
    which demonstrates that $\bfgamma$ is identically distributed as $\bfeps$.
\end{proof}

\begin{lemma}\label{mean_of_SO3}
    Assume $\bfQ$ is a random variable uniformly distributed over $\mathrm{SO}(3)$. Then $\bbE_{\bfQ} \bfQ = \bfzro$.
\end{lemma}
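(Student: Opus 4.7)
The plan is to exploit the left-invariance of the Haar (uniform) measure on $\mathrm{SO}(3)$. Let $\bfM := \bbE_{\bfQ} \bfQ \in \bbR^{3 \times 3}$, which is well-defined entry-wise since each entry of $\bfQ$ is bounded. I want to show that $\bfM$ is invariant under left multiplication by every rotation, and then conclude that $\bfM$ must be zero.

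First, I would fix an arbitrary $\bfR \in \mathrm{SO}(3)$ and use the change-of-variables argument: because $\bfQ$ is uniformly distributed on $\mathrm{SO}(3)$ (i.e.\ follows the Haar measure) and $\mathrm{SO}(3)$ is a group, the random matrix $\bfR \bfQ$ has the same distribution as $\bfQ$. Using linearity of expectation to pull the deterministic factor $\bfR$ outside, I obtain
\begin{align}
    \bfR \bfM \;=\; \bbE_{\bfQ}[\bfR \bfQ] \;=\; \bbE_{\bfQ'}[\bfQ'] \;=\; \bfM,
\end{align}
where $\bfQ' := \bfR \bfQ$. Hence $\bfR \bfM = \bfM$ for every $\bfR \in \mathrm{SO}(3)$.

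Next, reading this column-wise: each column $\bfM_{\cdot,j}$ of $\bfM$ satisfies $\bfR \bfM_{\cdot,j} = \bfM_{\cdot,j}$ for all $\bfR \in \mathrm{SO}(3)$. The only vector in $\bbR^3$ fixed by every element of $\mathrm{SO}(3)$ is the zero vector (otherwise its nonzero direction would be a common axis of all rotations, which is clearly false, e.g., a nontrivial rotation about any axis orthogonal to that direction moves it). Therefore every column of $\bfM$ is zero, giving $\bfM = \bfzro$.

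No real obstacle is expected. The only point worth being careful about is the justification that $\bfR \bfQ \stackrel{d}{=} \bfQ$, which is precisely the defining left-invariance property of the Haar measure on a compact group; this is the property encoded by the statement ``uniformly distributed over $\mathrm{SO}(3)$.'' Everything else is linearity of expectation and the elementary observation that $\mathrm{SO}(3)$ acts on $\bbR^3$ without nonzero common fixed points.
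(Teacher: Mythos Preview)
Your proof is correct and follows essentially the same idea as the paper: use the translation-invariance of the Haar measure to obtain an invariance identity for $\bfM = \bbE_{\bfQ}\bfQ$ and conclude $\bfM = \bfzro$. The only cosmetic differences are that the paper uses right-invariance ($\bfM = \bfM\bfQ_0$) and then plugs in the explicit sign-flip rotations $\Diag\{-1,-1,1\}$ and $\Diag\{1,-1,-1\}$ to kill each column, whereas you use left-invariance and the cleaner observation that $\mathrm{SO}(3)$ has no nonzero common fixed vector.
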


\begin{proof}
    For any fixed $\bfQ_0 \in \mathrm{SO}(3)$, the distribution of $\bfQ \bfQ_0$ is identical to that of $\bfQ$ due to the uniformity of the distribution over the group $\mathrm{SO}(3)$. Consequently, we have:
    \begin{align}
        \bbE_{\bfQ} \bfQ = \bbE_{\bfQ} \bfQ \bfQ_0 = \left[\bbE_{\bfQ} \bfQ \right] \bfQ_0
    \end{align}
    Suppose $\bbE_{\bfQ} \bfQ = [\bfqq_1, \bfqq_2, \bfqq_3]$, where $\bfqq_1, \bfqq_2, \bfqq_3$ are the columns of $\bbE_{\bfQ} \bfQ$. Then, it follows that:
    \begin{align}
        [\bfqq_1, \bfqq_2, \bfqq_3] = [\bfqq_1, \bfqq_2, \bfqq_3]\bfQ_0.
    \end{align}
    Selecting $\bfQ_0 = \Diag\{-1, -1, 1\}$ yields $\bfqq_1 = - \bfqq_1$ and $\bfqq_2 = - \bfqq_2$, , which implies that $\bfqq_1 = \bfqq_2 = \bfzro$. Similarly, choosing $\bfQ_0 = \Diag\{1, -1, -1\}$ results in $\bfqq_3 = - \bfqq_3$, hence $\bfqq_3 = \bfzro$. Therefore, we conclude that $\bbE_{\bfQ} \bfQ = \bfzro$.
\end{proof}

We now proceed to prove Proposition~\ref{equal_variant_denoiser}. Let $\bfQ$ be a random variable uniformly distributed over $\mathrm{SO}(3)$. Then according to Lemma~\ref{gaussion_rotation_invariant}, the random matrix $\bfgamma = \bfeps \bfQ$ is identically distributed as $\bfeps$. Then we have:
\begin{align}
    L(\bftheta) &= \bbE_{\bfeps} \| \bfS^{(\bftheta)}(\bfeps) - \bfR^{\star} \|_2^2 = \bbE_{\bfgamma} \| \bfS^{(\bftheta)}(\bfgamma) - \bfR^{\star} \|_2^2 = \bbE_{\bfeps,\bfQ} \| \bfS^{(\bftheta)}(\bfeps \bfQ) - \bfR^{\star} \|_2^2 \\
    & = \bbE_{\bfeps,\bfQ} \| \bfS^{(\bftheta)}(\bfeps) \bfQ - \bfR^{\star} \|_2^2 = \bbE_{\bfeps} \left[\bbE_{\bfQ} \| \bfS^{(\bftheta)}(\bfeps) - \bfR^{\star} \bfQ\trs \|_2^2\right].
\end{align}
For any $\bfeps \in \bbR^{A \times 3}$, the objective function $\bbE_{\bfQ} \lrVert{\bfS^{(\bftheta)}(\bfeps) - \bfR^{\star} \bfQ\trs }_2^2$ achieves its minimum when $\bfS^{(\bftheta)}(\bfeps) = \bbE_{\bfQ} [\bfR^{\star} \bfQ\trs] = \bfR^{\star} [\bbE_{\bfQ} \bfQ\trs]$. Since $\bbE_{\bfQ} \bfQ\trs = \bbE_{\bfQ} \bfQ$ and using Lemma~\ref{mean_of_SO3}, we have $\bbE_{\bfQ} \bfQ\trs = \bfzro$, which implies $\bfS^{(\bftheta)}(\bfeps) = \bfzro$. Therefore, $\bfS^{(\bftheta)}$ is a zero map.

\section{Implementation Details} \label{appx:imple_details}

\subsection{Implementation Details for Consistency Losses} \label{appx:consis-alg}
The proposed consistency loss $L_\optim$ in \eqnref{optim-csis} and $L_\score$ in \eqnref{score-csis} are designed to update the parameters $\bftheta_\tnS$ within the structure prediction model $\bfS_{\clG,\tau}^{(\bfphi,\bftheta_\tnS)}$. To achieve this, we employ the stop gradient operation $ \mathrm{SG}(\cdot)$ to prevent unnecessary gradient computation for the parameters $\bfphi$  in the encoder model $\clE_{\clG,t}^{(\bfphi)}$ and $\bftheta_\tnE$ in the energy decoder $\clD_\tnE^{((\bftheta_\tnE)}$. The detailed implementations of optimality consistency and score consistency are presented in Alg.~\ref{alg:optim-csis} Alg.~\ref{alg:score-csis}, respectively.

\begin{algorithm}
\caption{Implementation of Optimality Consistency Loss}
\begin{algorithmic}[1]
\Require Encoder: $\clE_{\clG,t}^{(\bfphi)}(\bfR)$, structure decoder $\clD_\tnE^{(\bftheta_\tnE)}$, energy decoder $\clD_\tnS^{(\bftheta_\tnS)}$, Diffusion time step $\tau$.
\Ensure $\nabla_{\bftheta_\tnS} L_\optim$
\State Sample $\bfeps$ and $\bfeta$.
\State Extract the molecular feature $\bfhh \gets \clE_{\clG,t=\tau}^{(\bfphi)}(\bfeps)$ using the encoder.
\State Apply the stop gradient operation to the molecular feature and obtain $\bar{\bfhh} \gets \mathrm{SG} (\bfhh)$ through Pytorch's \texttt{.detach()} method.
\State Compute the denoised structure $\hat{\bfR}_{0} \gets \clD_\tnS^{(\bftheta_\tnS)}(\bar{\bfhh}, \bfeps)$ using the structure decoder.
\State Set \texttt{requires\_grad = False} for the parameters in the energy model $E_\clG^{(\bfphi,\bftheta_\tnE)}(\bfR)=\clD_\tnE^{(\bftheta_\tnE)} \big( \clE_{\clG,t=0}^{(\bfphi)}(\bfR) \big)$
\State Evaluate the loss $L_\optim = \max \left\{ 0, \; E_\clG^{(\bfphi,\bftheta_\tnE)} ( \hat{\bfR}_{0}) -  E_\clG^{(\bfphi,\bftheta_\tnE)} ( \hat{\bfR}_{0} + \bfeta ) \right\}.$
\State Determine the gradient $\nabla_{\bftheta_\tnS} L_\optim$ through automatic differentiation.
\end{algorithmic}
\label{alg:optim-csis}
\end{algorithm}

\begin{algorithm}
\caption{Implementation of Score Consistency Loss}
\begin{algorithmic}[1]
\Require Encoder: $\clE_{\clG,t}^{(\bfphi)}(\bfR)$, structure decoder $\clD_\tnE^{(\bftheta_\tnE)}$, energy decoder $\clD_\tnS^{(\bftheta_\tnS)}$, Diffusion time step $\tau$.
\Ensure $\nabla_{\bftheta_\tnS} L_\score$
\State Sample $\bfR_\tau$.
\State Extract the molecular feature $\bfhh_0 \gets  \clE_{\clG,t=0}^{(\bfphi)}(\bfR_\tau)$ using the encoder.
\State Compute the free energy $\bfE \gets \clD_\tnE^{(\bftheta_\tnE)} \big( \bfhh_0 \big)$ with the energy decoder.
\State Compute the energy gradient $\bfF \gets \nabla_{\bfR_\tau} \bfE$ using PyTorch's \texttt{torch.autograd}.
\State Apply the stop gradient operation (\texttt{.detach()} method in Pytorch) to the energy gradient: $\bar{\bfF} \gets \mathrm{SG}\left( \bfF \right)$.
\State Extract the molecular feature $\bfhh_\tau \gets \clE_{\clG,t=\tau}^{(\bfphi)}(\bfR_\tau)$ using the encoder.
\State Obtain $\bar{\bfhh}_\tau \gets \mathrm{SG} (\bfhh_\tau)$ using \texttt{.detach()}.
\State Compute the denoised structure $\hat{\bfR}_0 \gets  \clD_\tnS^{(\bftheta_\tnS)} \big( \mathrm{SG} (\bar{\bfhh}_\tau),\bfR_\tau \big)$ with structure decoder.
\State Evaluate the loss $L_\score = \lrVert*[\bigg]{ \frac{\sqrt{\alphab_\tau} \hat{\bfR}_0 - \bfR_\tau}{1-\alphab_\tau} + \frac{ \bar{\bfF} }{k_\tnB \clT} }_2^2$.
\State Determine the gradient $\nabla_{\bftheta_\tnS} L_\score$ through automatic differentiation.
\end{algorithmic}
\label{alg:score-csis}
\end{algorithm}

\subsection{Model Architecture}

The model is composed of an encoder, and two decoders for structure and energy prediction.

The encoder $\clE_{\clG,t}^{(\bfphi)}(\bfR)$ is a simple modification to the Graphormer model~\citep{ying2021transformers,shi2022benchmarking}, which additionally adopts diffusion time $t$ embedding into both node features and pairwise-distance attention bias. The encoder consists of 24 layers of Graphormer, with the dimension of both hidden and feed-forward layers set to 768. It utilizes a multi-head attention mechanism with 32 heads and employs 128 Gaussian Basis kernels for enhancing the positional encoding.

For the time embedding, we implement a \texttt{SinusoidalPositionEmbeddings} module and a \texttt{TimeStepEncoder} module. The former generates time-dependent sinusoidal embeddings, and the latter refines these embeddings using a feed-forward network with a GELU activation function. The resulting time embeddings are then integrated into the node features to inform the model of temporal information.

Additionally, we incorporate time embeddings into the attention mechanism by computing a structure-based attention bias. This is achieved by calculating the outer product of the time embeddings and using the result as an additive bias in the self-attention layers. This integration allows the model to adapt its attention based on the temporal relationships between nodes in the graph.

On top of the encoder, the energy decoder $\clD_\tnE^{(\bftheta_\tnE)}(\bfhh)$ is a simple MLP layer concatenated to the invariant node features $\bfhh$.

The structure decoder $\clD_\tnS^{(\bftheta_\tnS)}(\bfhh, \bfR)$ adopts the GeoMFormer architecture~\citep{chen2024geomformer}, which takes the invariant node features $\bfhh$ from the output of the encoder, and the atom coordinates $\bfR$. The output is denoised atom coordinates which are equivariant w.r.t the input coordinates $\bfR$.
These modules are combined to form the energy and structure prediction models following \eqnref{model-composition}.

\subsection{Training Details} \label{appx:training_details}
Our pre-training procedure is executed in two discrete stages. Initially, the model is subjected to training exclusively utilizing the multi-task loss function, $L_{\multitask}$, for a total of 300,000 iterations. Subsequently, in the second stage, we integrate the proposed consistency loss and the force loss, $L_{\textnormal{force}}$, into the training regimen, which then proceeds for an additional 200,000 iterations. All the models are trained with the Adam optimizer~\citep{kingma2015adam} with batch size 256. The learning rate is set to $2\e{-4}$ with a linear warm-up phase in the initial 10,000 steps, which followed by a linear decay schedule thereafter.

The weights of the energy loss and the diffusion denoising loss, \ie, the first and the second terms in \eqnref{loss-multi}, are set to $1.0$ and $0.01$, respectively. The weights of the optimality consistency loss \eqnref{optim-csis} and the score consistency loss \eqnref{score-csis} are set to $0.1$ and $1.0$, respectively.

We employ a sigmoid schedule across 1,000 diffusion time steps for $\beta_t$, with $\beta_0 = 1\e{-4}$ and $\beta_T = 2\e{-2}$.
For the optimality consistency loss, the diffusion time step $\tau$ is sampled uniformly from $[400, 700]$.
For the score consistency loss, the diffusion time step $\tau$ is sampled uniformly from $[5, 300]$, and $k_\tnB \clT$ is set to $0.1 \, \mathrm{eV}$.

The multi-task model is trained on an $8 \times$ Nvidia V100 GPU server for approximately one week. The model with the consistency loss is trained on one $16 \times$ Nvidia V100 GPU server.

\section{Additional Results} \label{appx:add_results}

In this section, we provide additional results under various combinations of settings, and additional metrics and supporting evidence to complement the main results. For clarity, we summarize main result tables in \tabref{res-index} for easier indexing.

\begin{table}[t]
\setlength{\tabcolsep}{3pt}
\centering
\caption{Index of main result tables in the paper.}
\label{tab:res-index}
\footnotesize
\begin{tabular}{lccccccc}
\toprule
Training settings & \tabincell{c}{Test \\ RMSD} & \tabincell{c}{Standard \\ deviation} & \tabincell{c}{Paired \\ t-test} & Coverage & \tabincell{c}{Validation \\ (structure)} & \tabincell{c}{Validation \\ (energy)} & EGap \\
\midrule
PM6 & \tabref{vanilla_test} & \tabref{std_pre} & \tabref{t-test} & \tabref{cov_pretrain} & \tabref{validation-test} & \tabref{energy_valid} & \tabref{egap} \\
PM6 w/ force & \tabref{pretrain-spice-and-pm6-force} & \tabref{std_pre} & \tabref{t-test} & \tabref{cov_pretrain} & \tabref{validation-test} & \tabref{energy_valid} & \tabref{egap} \\
PM6 \phantom{w/ force} + finetuning & \tabref{finetune_test} & \tabref{std_fine} & \tabref{t-test} & \tabref{cov_finetune} & N/A & N/A & N/A \\
PM6 w/ force + finetuning & \tabref{finetune-spice-and-pm6-force} & \tabref{std_fine} & - & \tabref{cov_finetune} & N/A & N/A & N/A \\
\bottomrule
\end{tabular}
\end{table}

\subsection{Additional Fine-Tuning Results} \label{appx:finetune_force}
This section demonstrate more fine-tuning results listed in the main text.
Besides the fine-tuned PM6 dataset pre-trained mode, we performed fine-tuning experiments on the model pre-trained with the SPICE force and PM6 subset force datasets, as well. The results are presented in \tabref{finetune-spice-and-pm6-force}.
Similar to the case in \tabref{finetune_test}, we can again observe that while finetuning improves structure prediction accuracy in all settings (compared to \tabref{pretrain-spice-and-pm6-force}), pre-training the model with consistency loss still enhances the accuracy universally.

\begin{table}[h]
\setlength{\tabcolsep}{5pt}
\centering
\caption{Test RMSD (\angstrom; lower is better) \emph{after finetuning} for structure prediction pre-trained by multi-task learning and consistency learning on the PM6 dataset \emph{with additional SPICE force data or PM6 subset force data}.}
\label{tab:finetune-spice-and-pm6-force}
\begin{tabular}{r|rcccccccc}
\toprule
\multirow{3}{*}{\makecell[cr]{\\ (Pre-)Training \\ Set}}
& Test Set & \multicolumn{4}{c}{PCQ} & \multicolumn{4}{c}{QM9} \\ \cmidrule(lr){3-6} \cmidrule(lr){7-10}
& Generated by & \multicolumn{2}{c}{Denoising} & \multicolumn{2}{c}{DDIM} & \multicolumn{2}{c}{Denoising} & \multicolumn{2}{c}{DDIM} \\
\cmidrule(lr){3-4} \cmidrule(lr){5-6} \cmidrule(lr){7-8} \cmidrule(lr){9-10}
& Struct. Stat. & Mean & Min & Mean & Min & Mean & Min & Mean &  Min  \\ \midrule
\multirow{2}{*}{\makecell[cr]{PM6 with \\ SPICE force}}
& Multi-Task & 1.161 & 0.618 & 0.930 & 0.219 & 0.855 & 0.444 & 0.505 & 0.081 \\
& Consistency & \textbf{1.132} & \textbf{0.581} & \textbf{0.916} & \textbf{0.215} & \textbf{0.832} & \textbf{0.418} & \textbf{0.492} & \textbf{0.073} \\
\midrule
\multirow{2}{*}{\makecell[cr]{PM6 with \\ subset force}}
& Multi-Task & 1.143 & 0.603 & 0.927 & 0.224 & 0.855 & 0.441 & 0.497 & 0.080 \\
& Consistency & \textbf{1.099} & \textbf{0.542} & \textbf{0.914} & \textbf{0.215} & \textbf{0.822} & \textbf{0.419} & \textbf{0.490} & \textbf{0.076} \\ \bottomrule
\end{tabular}
\end{table}

\subsection{Coverage Evaluation for Structure Generation} \label{appx:coverage}
Following the common practice in structure-generation literature, we also test the coverage of the ground-truth structure over model-generated structures. This metric is defined as:
\begin{align}
  \mathrm{COV}(\bbS_\text{gen}, \bfR_\texteq) = \frac{1}{|\bbS_\text{gen}|} \lrvert{\{\hat{\bfR} \in \bbS_\text{gen} \mid \mathrm{RMSD}(\bfR_\texteq, \hat{\bfR}) < \delta \}},
  \label{eqn:def-cov}
\end{align}
where $\bbS_\text{gen}$ denotes the set of generated structures, $\bfR_\texteq$ denotes the ground-truth equilibrium structure provided from the evaluation dataset, $\delta$ is a threshold parameter, and $\lrvert{\cdot}$ takes the cardinality of a set.
Note that since for the task of equilibrium structure prediction, there is only one ground-truth structure, we only evaluate the so-called precision coverage, since the recall coverage (by switching the roles of $\bbS_\text{gen}$ and $\bfR_\texteq$ in the definition \eqnref{def-cov}) evaluates to 1 in all cases.
Here, the RMSD is evaluated after alignment of the two structures by Kabsch algorithm~\citep{kabsch1976solution}.
We choose $\delta$ as 0.9 and 1.25 for QM9 and PCQ dataset. Same as the RMSD test, we use the same test molecules and sample 200 structure for each molecule. The coverage (COV) of the PM6 dataset model prediction and including force dataset model prediction are shown in \tabref{cov_pretrain}. The results present  consistent results as the RMSD. After adding the consistency loss, all have improvement over the multi-task setting. Besides, incorporating the force data can also improve the accuracy.

\begin{table}[h]
\setlength{\tabcolsep}{3.5pt}
\centering
\caption{Test coverage (higher is better) of structure prediction by multi-task learning and consistency learning on the PM6 dataset, and together with additional SPICE force data or PM6 subset force data.}
\label{tab:cov_pretrain}
\begin{tabular}{r|rcccccccc}
\toprule
\multirow{3}{*}{\makecell[cr]{ \\ Training \\ Set}}
& Test Set & \multicolumn{4}{c}{PCQ} & \multicolumn{4}{c}{QM9} \\ \cmidrule(lr){3-6} \cmidrule(lr){7-10}
& Generated by & \multicolumn{2}{c}{Denoising} & \multicolumn{2}{c}{DDIM} & \multicolumn{2}{c}{Denoising} & \multicolumn{2}{c}{DDIM} \\
\cmidrule(lr){3-4} \cmidrule(lr){5-6} \cmidrule(lr){7-8} \cmidrule(lr){9-10}
& Struct. Stat. & Mean & Median & Mean & Median & Mean & Median & Mean &  Median  \\ \midrule
\multirow{2}{*}{\makecell[cr]{PM6}}
&Multi-Task & 0.597 & 0.670 & 0.649 & 0.675 & 0.468 & 0.478 & 0.717 & 0.780 \\
&Consistency & \textbf{0.626} & \textbf{0.695} & \textbf{0.671} & \textbf{0.730} & \textbf{0.604} & \textbf{0.660} & \textbf{0.732} & \textbf{0.825} \\
\midrule
\multirow{2}{*}{\makecell[cr]{PM6 with \\ SPICE force}}
&Multi-Task & 0.614 & 0.660 & 0.645 & 0.685 & 0.550 & 0.580 & 0.715 & 0.765 \\
&Consistency & \textbf{0.644} & \textbf{0.710} & \textbf{0.675} & \textbf{0.725} & \textbf{0.617} & \textbf{0.705} & \textbf{0.740} & \textbf{0.835} \\
\midrule
\multirow{2}{*}{\makecell[cr]{PM6 with \\ subset force}}
&Multi-Task & 0.590 & 0.650 & 0.662 & 0.705 & 0.493 & 0.520 & 0.736 & 0.835 \\
&Consistency & \textbf{0.677} & \textbf{0.775} & \textbf{0.671} & \textbf{0.720} & \textbf{0.643} &\textbf{ 0.690} & \textbf{0.737} & \textbf{0.845} \\
 \bottomrule
\end{tabular}
\end{table}

We also evaluated the coverage on the fine-tuned models. The results are shown in \tabref{cov_finetune}, where we again observe that pre-training with consistency loss improves structure prediction accuracy even after fine-tuning.

\begin{table}[h]
\setlength{\tabcolsep}{3.5pt}
\centering
\caption{Test coverage (higher is better) \emph{after finetuning} for structure prediction pre-trained by multi-task learning and consistency learning on the PM6 dataset, and together with additional SPICE force data or PM6 subset force data.}
\label{tab:cov_finetune}
\begin{tabular}{r|rcccccccc}
\toprule
\multirow{3}{*}{\makecell[cr]{ \\ (Pre-)Training \\ Set}}
& Test Set & \multicolumn{4}{c}{PCQ} & \multicolumn{4}{c}{QM9} \\ \cmidrule(lr){3-6} \cmidrule(lr){7-10}
& Generated by & \multicolumn{2}{c}{Denoising} & \multicolumn{2}{c}{DDIM} & \multicolumn{2}{c}{Denoising} & \multicolumn{2}{c}{DDIM} \\
\cmidrule(lr){3-4} \cmidrule(lr){5-6} \cmidrule(lr){7-8} \cmidrule(lr){9-10}
& Struct. Stat. & Mean & Median & Mean & Median & Mean & Median & Mean &  Median  \\ \midrule
\multirow{2}{*}{\makecell[cr]{PM6}}
&Multi-Task & 0.629 & 0.675 & 0.721 & \textbf{0.800} & 0.543 & 0.595 & \textbf{0.790} & 0.960 \\
&Consistency & \textbf{0.632} & \textbf{0.700} & \textbf{0.724} & 0.795 & \textbf{0.622} & \textbf{0.690} & 0.788 & \textbf{0.990} \\
\midrule
\multirow{2}{*}{\makecell[cr]{PM6 with \\ SPICE force}}
&Multi-Task & 0.632 & 0.705 & 0.714 & 0.760 & 0.597 & 0.660 & 0.784 & 0.980 \\
&Consistency & \textbf{0.649} &\textbf{0.725} & \textbf{0.721} & \textbf{0.795} & \textbf{0.631} & \textbf{0.710} &\textbf{ 0.791} & \textbf{0.990} \\
\midrule
\multirow{2}{*}{\makecell[cr]{PM6 with \\ subset force}}
&Multi-Task & 0.643 & 0.705 & 0.719 & \textbf{0.795} & 0.600 & 0.650 & 0.786 & 0.990 \\
&Consistency & \textbf{0.680} & \textbf{0.780} & \textbf{0.720} & 0.794 & \textbf{0.643} & \textbf{0.715} & \textbf{0.789} & \textbf{0.990} \\
 \bottomrule
\end{tabular}
\end{table}

\subsection{Energy Gap Analysis} \label{appx:egap}

Our goal is to predict the equilibrium structure $\bfR_{\text{eq}} = \argmin_{\bfR} E_{\clG}(\bfR)$.  It is desirable for the predicted structure to approximate this state of minimal energy. To quantify the proximity of the predicted structure to the equilibrium state, we introduce the energy gap metric, which is defined as:
\begin{align}
    \mathrm{EGap} = \frac{E_\clG^{(\bfphi,\bftheta_\tnE)}(\bfR_{\text{pred}}) - E_\clG^{(\bfphi,\bftheta_\tnE)}(\bfR_{\text{eq}})}{|E_\clG^{(\bfphi,\bftheta_\tnE)}(\bfR_{\text{eq}})|}.
\end{align}
The $\mathrm{EGap}$ metric serves to evaluate the energy difference between the predicted and equilibrium structures, with a smaller energy gap signifying a more accurate prediction.

To compute this metric, we randomly select 200 molecules from the intersection of PM6 and PCQ dataset, using the structure from the PCQ dataset as $\bfR_{\text{eq}}$. The predicted structure $\bfR_{\text{pred}}$ is generated using the denoising method. The results are presented in \tabref{egap}. We observe that the incorporation of the consistency loss reduces the EGap metric, particularly in cases with additional force labels. These results demonstrate that the consistency loss effectively transfers information from the energy model to the structure model.

\begin{table}[h]
\centering
\caption{Comparison of averaged EGap between structure prediction by multi-task learning and consistency learning. Lower EGap values suggest that the energy of the predicted structure is closer to the theoretical minimum energy.}
\label{tab:egap}
\begin{tabular}{rccc}
\toprule
Training Set & PM6 & \makecell{PM6 with \\ SPICE force} & \makecell{PM6 with \\ PM6 subset force} \\ \midrule
Multi-Task & 0.1278 & 0.0546 & 0.1163 \\
Consistency & \textbf{0.1172} & \textbf{0.0306} & \textbf{0.1013} \\ \bottomrule
\end{tabular}
\end{table}

\subsection{Validation Results} \label{appx:validation}

To make sure that the conclusions drawn from the above results are solid, we further provide validation results for energy and structure.

For the energy, we randomly selected 200 molecules from the intersection of the PM6 dataset (training set) and the PCQ dataset (test set). This choice allows evaluating energy on both PM6 structure and PCQ structure for each molecule, where the former reflects training quality, and the latter reflects the utility for consistency learning of structure prediction.
Although the original PCQ dataset~\citep{hu2021ogb} does not provide energy labels, we noticed that it is curated based on the PubChemQC Project dataset~\citep{nakata2017pubchemqc}, which provides energy labels on the same PCQ structures under the same DFT settings (B3LYP/6-31G*) as the PM6 energy labels.

The results are listed in \tabref{energy_valid} and boxplots of the distributions of energy prediction MAE (eV) evaluated on the PM6 structures are shown in \figref{box_plot_eng_validation}.
We can observe that energy prediction on PM6 structures are reasonably accurate, and it gets even better when consistency training is activated. This indicates that consistency training does not sacrifice energy prediction.
The energy prediction error is larger on PCQ structures, which is not surprising since the model does not see data with PCQ structures in training.
But as we mentioned in the main paper, better generalization can be expected if force data are available, which enriches the information on the energy landscape hence improving the generalization on unseen structures. From the table, we can observe that the energy prediction accuracy on PCQ structures is indeed improved.
We also notice that including force data in training even improves energy prediction accuracy on PM6 structures, for which a possible explanation is that introducing additional relevant prediction tasks could help the model learn a more comprehensive representation of the input molecular structure, which in turn helps other tasks.

\begin{table}[h]
\centering
\caption{Validation MAE (eV) of energy prediction trained by multi-task learning and consistency learning. Validation molecules are randomly selected from the intersection of PM6 and PCQ, and results on both PM6 structures and PCQ structures of the molecules are shown.}
\label{tab:energy_valid}
\begin{tabular}{r|rccc}
\toprule
\makecell[r]{Validation \\ structures from} & Training Set & PM6 & \makecell{PM6 with \\ SPICE force} & \makecell{PM6 with \\ PM6 subset force} \\
\midrule
\multirow{2}{*}{PM6 structures} &
Multi-Task & 96.08 & 83.15 & 79.00 \\
& Consistency & \textbf{88.58 }& \textbf{78.56} & \textbf{72.61} \\
\midrule
\multirow{2}{*}{PCQ structures} &
Multi-Task & 136.66 & 135.93 & 117.22 \\
& Consistency & \textbf{127.06} & \textbf{110.80} & \textbf{108.71}  \\ \bottomrule
\end{tabular}
\end{table}

\begin{figure}[H]
    \centering
    \includegraphics[width=0.5\linewidth]{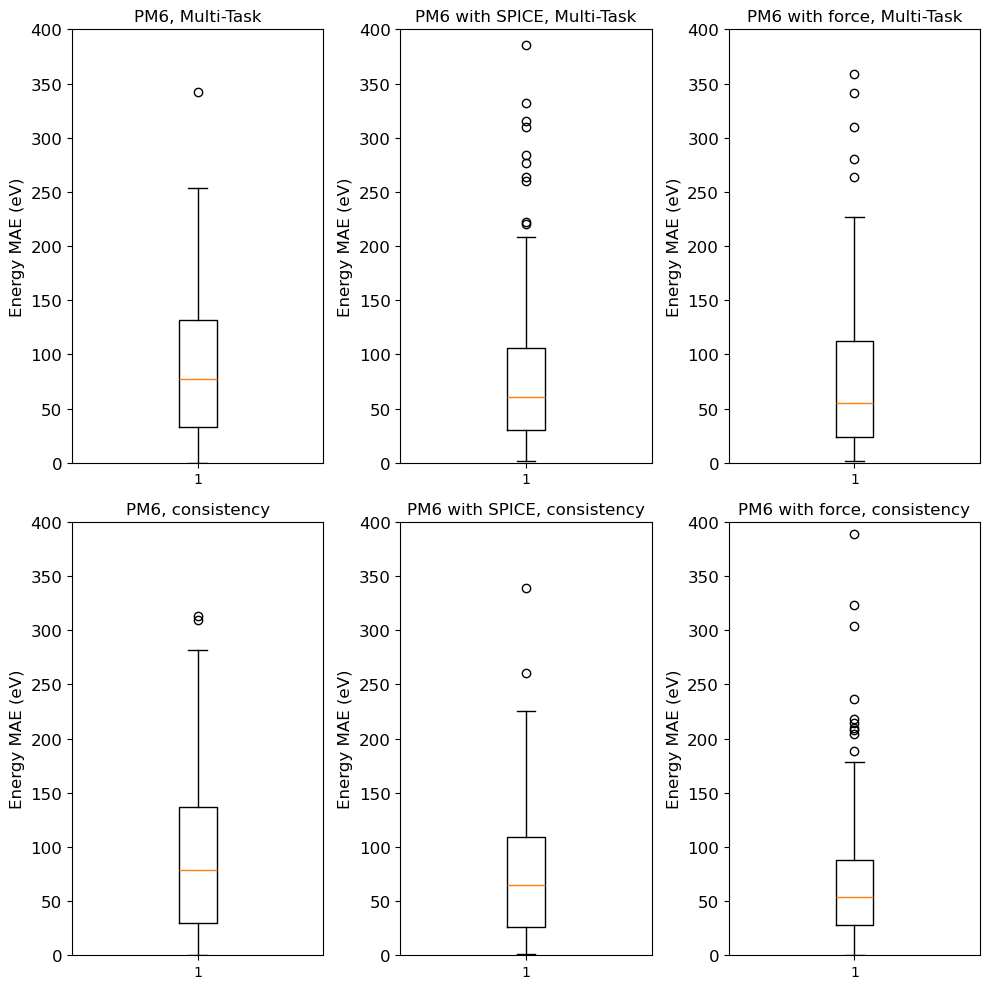}
    \caption{Box plots for the distributions of energy prediction MAE (eV) evaluated on the PM6 structures of randomly selected 200 molecules from the intersection of PM6 and PCQ datasets.
    }
    \label{fig:box_plot_eng_validation}
\end{figure}

For structure prediction, we validate the trained model by evaluating the RMSD of generated structures on 200 randomly selected PM6 molecules.
From the results shown in \tabref{validation-test}, we can see that the models trained by both multi-task learning and consistency learning achieve reasonable predictions.

\begin{table}[h]
\setlength{\tabcolsep}{3pt}
\centering
\caption{Validation RMSD (\angstrom) evaluated on the PM6 validation set of structure prediction trained by multi-task learning and consistency learning on the PM6 dataset, and together with additional SPICE force data or PM6 subset force data.
Predicted structures are generated by the DDIM method.}
\label{tab:validation-test}
\begin{tabular}{rcccccc}
\toprule
Training Set & \multicolumn{2}{c}{PM6} & \multicolumn{2}{c}{\makecell{PM6 with \\ SPICE force}} & \multicolumn{2}{c}{\makecell{PM6 with \\ PM6 subset force}}  \\
\cmidrule(lr){2-3} \cmidrule(lr){4-5} \cmidrule(lr){6-7}
Struct. Stat. & Mean  & Min  & Mean  & Min  & Mean  & Min    \\ \midrule
Multi-Task & 0.970 & 0.287 & 0.983 & 0.314 & 0.953 & 0.267  \\
Consistency & 0.931 & 0.280 & 0.944 & 0.286 & 0.958 & 0.290  \\
\bottomrule
\end{tabular}
\end{table}

\subsection{Standard Deviations and Statistical Significance} \label{appx:std}
To evaluate the significance of our comparison results, for each experimental setup, we repeated five independent runs using different random seeds (666666, 666667, 666668, 666669, 666670). The reported results in the main paper are the means of the five repeats, while the standard deviations are listed in \tabref{std_pre} and \tabref{std_fine} here, which cover the results for the pre-trained models and for fine-tuned models, respectively. We can see that all the standard deviations are around 0.003~$\angstrom$, which is clearly lower than the improvements by consistency training, hence indicating the significance of the effectiveness of the proposed methods.

To more seriously assess the statistical significance, we also conducted a paired t-test for each comparison. The p-values are listed in \tabref{t-test}, where each number represents the probability of the observed results under the null hypothesis that the means by multi-task learning and consistency learning are the same, hence the lower value the more significant that consistency learning outperforms multi-task learning.
Nearly all p-values are well below the 0.05 significance threshold, indicating that the improvement by consistency learning is significant. %
Exceptional cases almost all correspond to the setting where the result in each repeat is the ``Min''imum RMSD over 200 generated samples, in which case multi-task learning may also has a chance to hit the target structure. %

\begin{table}[h]
\setlength{\tabcolsep}{3pt}
\centering
\caption{Standard deviations for the test RMSD (\angstrom) of structure prediction by multi-task learning and consistency learning on the PM6 dataset (corresponding to \tabref{vanilla_test}), and together with additional SPICE force data or PM6 subset force data (corresponding to \tabref{pretrain-spice-and-pm6-force}).}
\label{tab:std_pre}
\begin{tabular}{r|rcccccccc}
\toprule
\multirow{3}{*}{\makecell[cr]{\\ Training \\ Set}}
& Test Set & \multicolumn{4}{c}{PCQ} & \multicolumn{4}{c}{QM9} \\ \cmidrule(lr){3-6} \cmidrule(lr){7-10}
& Generated by & \multicolumn{2}{c}{Denoising} & \multicolumn{2}{c}{DDIM} & \multicolumn{2}{c}{Denoising} & \multicolumn{2}{c}{DDIM} \\
\cmidrule(lr){3-4} \cmidrule(lr){5-6} \cmidrule(lr){7-8} \cmidrule(lr){9-10}
& Struct. Stat. & Mean & Min & Mean & Min & Mean & Min & Mean &  Min  \\
\midrule
\multirow{2}{*}{\makecell[cr]{PM6}}
&Multi-Task & 0.0013 & 0.0045 & 0.0019 & 0.0059 & 0.0004 & 0.0026 & 0.0017 & 0.0015 \\
&Consistency & 0.0021 & 0.0021 & 0.0016 & 0.0053 & 0.0018 & 0.0022 & 0.0080 & 0.0016 \\
\midrule
\multirow{2}{*}{\makecell[cr]{PM6 with \\ SPICE force}}
&Multi-Task & 0.0021 & 0.0042 & 0.0020 & 0.0067 & 0.0013 & 0.0040 & 0.0084 & 0.0024 \\
&Consistency & 0.0026 & 0.0024 & 0.0017 & 0.0030 & 0.0019 & 0.0034 & 0.0025 & 0.0030 \\
\midrule
\multirow{2}{*}{\makecell[cr]{PM6 with \\ subset force}}
&Multi-Task & 0.0113 & 0.0048 & 0.0044 & 0.0127 & 0.0015 & 0.0134 & 0.0020 & 0.0025 \\
&Consistency & 0.0027 & 0.0040 & 0.0023 & 0.0043 & 0.0011 & 0.0030 & 0.0033 & 0.0030 \\ \bottomrule
\end{tabular}
\end{table}

\begin{table}[h]
\setlength{\tabcolsep}{3pt}
\centering
\caption{Standard deviations for the test RMSD (\angstrom) \emph{after finetuning} for structure prediction pre-trained by multi-task learning and consistency learning on the PM6 dataset (corresponding to \tabref{finetune_test}), and together with additional SPICE force data or PM6 subset force data (corresponding to \tabref{finetune-spice-and-pm6-force}).}
\label{tab:std_fine}
\begin{tabular}{r|rcccccccc}
\toprule
\multirow{3}{*}{\makecell[cr]{\\ (Pre-)Training \\ Set}}
& Test Set & \multicolumn{4}{c}{PCQ} & \multicolumn{4}{c}{QM9} \\ \cmidrule(lr){3-6} \cmidrule(lr){7-10}
& Generated by & \multicolumn{2}{c}{Denoising} & \multicolumn{2}{c}{DDIM} & \multicolumn{2}{c}{Denoising} & \multicolumn{2}{c}{DDIM} \\
\cmidrule(lr){3-4} \cmidrule(lr){5-6} \cmidrule(lr){7-8} \cmidrule(lr){9-10}
& Struct. Stat. & Mean & Min & Mean & Min & Mean & Min & Mean &  Min  \\ \midrule
\multirow{2}{*}{\makecell[cr]{PM6}}
&Multi-Task & 0.0021 & 0.0059 & 0.0025 & 0.0036 & 0.0008 & 0.0026 & 0.0011 & 0.0044 \\
&Consistency & 0.0024 & 0.0064 & 0.0020 & 0.0033 & 0.0011 & 0.0028 & 0.0022 & 0.0023 \\
\midrule
\multirow{2}{*}{\makecell[cr]{PM6 with \\ SPICE force}}
&Multi-Task & 0.0023 & 0.0053 & 0.0012 & 0.0051 & 0.0016 & 0.0038 & 0.0019 & 0.0033 \\
&Consistency & 0.0023 & 0.0047 & 0.0030 & 0.0013 & 0.0018 & 0.0030 & 0.0027 & 0.0015 \\
\midrule
\multirow{2}{*}{\makecell[cr]{PM6 with \\ subset force}}
&Multi-Task & 0.0022 & 0.0042 & 0.0016 & 0.0053 & 0.0013 & 0.0036 & 0.0019 & 0.0014 \\
&Consistency & 0.0019 & 0.0038 & 0.0019 & 0.0072 & 0.0016 & 0.0068 & 0.0016 & 0.0023 \\ \bottomrule
\end{tabular}
\end{table}

\begin{table}[h]
\setlength{\tabcolsep}{1.5pt}
\centering
\caption{Paired t-test p-values on structure prediction RMSD means over 5 repeats (standard deviations are shown in Tables~\ref{tab:std_pre} and~\ref{tab:std_fine}) corresponding to the results in \tabref{vanilla_test} (row 1, pre-training on PM6), \tabref{pretrain-spice-and-pm6-force} (rows 2 and 3, pre-training on PM6 together with force labels), and \tabref{finetune_test} (row 4, pre-training on PM6 then finetuning).
Values lower than the 0.05 significance threshold are shown in bold.
}
\label{tab:t-test}
\renewcommand{\e}[1]{{\scriptstyle \!\times\! 10^{#1}}}
\footnotesize
\begin{tabular}{rcccccccc}
\toprule
Test Set & \multicolumn{4}{c}{PCQ} & \multicolumn{4}{c}{QM9} \\ \cmidrule(lr){2-5} \cmidrule(lr){6-9}
Generated by & \multicolumn{2}{c}{Denoising} & \multicolumn{2}{c}{DDIM} & \multicolumn{2}{c}{Denoising} & \multicolumn{2}{c}{DDIM} \\
\cmidrule(lr){2-3} \cmidrule(lr){4-5} \cmidrule(lr){6-7} \cmidrule(lr){8-9}
Struct. Stat. & Mean & Min & Mean & Min & Mean & Min & Mean & Min \\ \midrule
PM6  & $\bm{7.8 \e{-4}}$ & $\bm{5.6 \e{-3}}$ & $\bm{3.8 \e{-7}}$ & $\bm{8.9 \e{-4}}$ & $\bm{4.5 \e{-7}}$ & $\bm{8.2 \e{-7}}$ & $\bm{1.1 \e{-3}}$ & $\bm{7.4 \e{-4}}$ \\
\midrule
PM6 w/ SPICE force  & $\bm{2.4 \e{-5}}$ & $\bm{1.0 \e{-5}}$ & $\bm{6.0 \e{-4}}$ & $\bm{3.1 \e{-5}}$ & $\bm{9.9 \e{-8}}$ & $4.5 \e{-1}$ & $\bm{2.3 \e{-2}}$ & $\bm{1.5 \e{-4}}$ \\
\midrule
PM6 w/ subset force  & $\bm{4.5 \e{-3}}$ & $\bm{2.4 \e{-5}}$ & $\bm{2.9 \e{-3}}$ & $\bm{2.4 \e{-2}}$ & $\bm{6.5 \e{-7}}$ & $\bm{1.2 \e{-4}}$ & $5.1 \e{-2}$ & $7.5 \e{-1}$ \\
\midrule
PM6 + finetuning  & $\bm{4.2 \e{-3}}$ & $2.4 \e{-1}$ & $\bm{2.4 \e{-3}}$ & $1.7 \e{-1}$ & $\bm{1.2 \e{-7}}$ & $\bm{2.5 \e{-6}}$ & $\bm{1.0 \e{-4}}$ & $\bm{8.5 \e{-3}}$ \\
\bottomrule
\end{tabular}
\end{table}

\newpage
\subsection{Evaluation on Dissimilar Molecules} \label{appx:dissimilar}
To further consolidate the effectiveness of our method, we give a closer look into the influence of the similarity of the test dataset to the training dataset.
Recall that we have excluded identical molecules appearing in the training dataset from the test dataset, but there remains the possibility that the test dataset may still contain similar molecules as those in the training dataset. For this, we first investigate the Tanimoto similarity~\citep{bajusz2015tanimoto} between the training and test datasets. This similarity can be computed from the molecular graphs of two molecules, while considering structural similarity between the two molecules. We plot in \figref{tanimoto_sim} where each column visualizes the count of PCQ test molecules that have a certain portion of similar (Tanimoto similarity > 0.7) molecules in the PM6 training dataset.
We can observe that most (almost all) of the test molecules only has less than $1\e{-7}$ ($2.5\e{-7}$) similar molecules in PM6. This indicates that excluding identical molecules appearing in the training dataset from the test dataset already makes the test dataset sufficiently dissimilar to the training dataset, hence the presented results are valid prediction evaluations that are not close to ``memorizing the training molecules''. %

\begin{figure}[H]
    \centering
    \vspace{4pt}
    \includegraphics[width=0.6\linewidth]{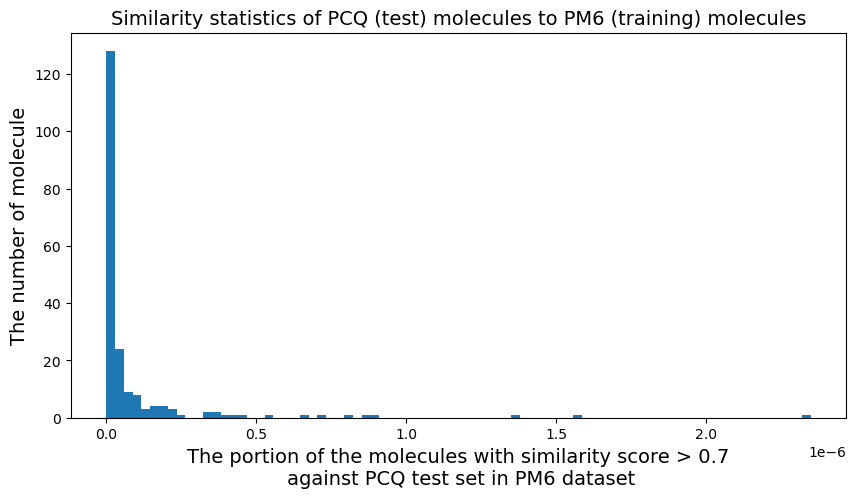}
    \caption{Histogram showing the distribution of the portion of similar (Tanimoto similarity > 0.7) molecules in PM6 (the training dataset) over the 200 PCQ test molecules. Note that the x-axis is scaled by $1\e{-6}$.
    }
    \label{fig:tanimoto_sim}
\end{figure}

\vspace{4pt}
For a completely sanitized evaluation on dissimilar molecules, we provide the results evaluated on PCQ test molecules that do not have any similar (Tanimoto similarity > 0.7) molecules in the PM6 (training) dataset. Such molecules make up 49\% of the total 200 PCQ test molecules. \tabref{test_disimilar} shows the results in both RMSD and Coverage using the denoising generation method, corresponding to the settings in Tables~\ref{tab:vanilla_test} and~\ref{tab:cov_pretrain} (row block 1) (pre-training on PM6), and Tables~\ref{tab:pretrain-spice-and-pm6-force} and~\ref{tab:cov_pretrain} (row blocks 2 and 3) (pre-training on PM6 together with force labels).
The results confirmed the advantage of consistency learning on dissimilar molecules that cannot rely on memorizing the training dataset, which further consolidates the conclusion.

\begin{table}[h]
\setlength{\tabcolsep}{3pt}
\centering
\caption{Test RMSD (\angstrom; lower is better) and coverage (higher is better) \emph{on dissimilar molecules} for structure prediction pre-trained by multi-task learning and consistency learning on the PM6 dataset, and together with additional SPICE force data or PM6 subset force data.}
\label{tab:test_disimilar}
\begin{tabular}{r|rcccc}
\toprule
\multirow{2}{*}{\makecell[cr]{\\ Training Set}}
& \multirow{2}{*}{\makecell[cr]{\\ Method}} & \multicolumn{2}{c}{RMSD (\angstrom)} & \multicolumn{2}{c}{Cov} \\
\cmidrule(lr){3-4} \cmidrule(lr){5-6}
&              & Mean & Min  & Mean & Median \\
\midrule
\multirow{2}{*}{\makecell[cr]{PM6 \\ (\cf Tables~\ref{tab:vanilla_test} or~\ref{tab:cov_pretrain} (row block 1))}}
& Multi-Task   & 1.175 & 0.642 & 0.613 & 0.675 \\
& Consistency  & \textbf{1.135} & \textbf{0.625} & \textbf{0.644} & \textbf{0.745} \\
\midrule
\multirow{2}{*}{\makecell[cr]{PM6 w/ SPICE force \\ (\cf Tables~\ref{tab:pretrain-spice-and-pm6-force} (row block 1) or~\ref{tab:cov_pretrain} (row block 2))}}
& Multi-Task   & 1.136 & 0.609 & 0.639 & 0.735 \\
& Consistency  & \textbf{1.121} & \textbf{0.579} & \textbf{0.672} & \textbf{0.790} \\
\midrule
\multirow{2}{*}{\makecell[cr]{PM6 w/ subset force \\ (\cf Tables~\ref{tab:pretrain-spice-and-pm6-force} (row block 2) or~\ref{tab:cov_pretrain} (row block 3))}}
& Multi-Task   & 1.174 & 0.653 & 0.612 & 0.660 \\
& Consistency  & \textbf{1.099} & \textbf{0.616} & \textbf{0.697} & \textbf{0.830} \\
\bottomrule
\end{tabular}
\end{table}

\end{document}